\documentclass{article}

\usepackage{fullpage}

\usepackage[utf8]{inputenc} 
\usepackage[T1]{fontenc}    
\usepackage{hyperref}       
\usepackage{url}            
\usepackage{booktabs}       
\usepackage{amsfonts}       
\usepackage{nicefrac}       
\usepackage{microtype}      
\usepackage{amsmath}
\usepackage{amssymb}
\usepackage{mathtools}
\usepackage{amsthm}
\usepackage[capitalize,noabbrev]{cleveref}      
\usepackage{lipsum}         
\usepackage{graphicx}
\usepackage{natbib}
\usepackage{doi}

\usepackage{adjustbox}
\usepackage{dsfont}
\usepackage{float}
\usepackage{subcaption}
\usepackage{enumitem}

\newcommand{\argmin}[1]{\underset{#1}{\mathrm{argmin}} \ }
\newcommand{\argmax}[1]{\underset{#1}{\mathrm{argmax}} \ }

\theoremstyle{plain}
\newtheorem{theorem}{Theorem}[section]
\newtheorem{proposition}[theorem]{Proposition}
\newtheorem{lemma}[theorem]{Lemma}
\newtheorem{corollary}[theorem]{Corollary}
\theoremstyle{definition}
\newtheorem{definition}[theorem]{Definition}
\newtheorem{assumption}[theorem]{Assumption}
\theoremstyle{remark}

\usepackage{xcolor}
\usepackage{tcolorbox}
\definecolor{regcol}{HTML}{45347f}
\definecolor{volcol}{HTML}{20908c}
\definecolor{IICcol}{HTML}{9ad83c}
\newcommand{\tbox}[2]{%
  \begingroup\setlength\fboxsep{1.2pt}\colorbox{#1}{#2}\endgroup%
}

\newcommand{\mathbox}[2]{%
  \begingroup\setlength\fboxsep{1.2pt}%
  \colorbox{#1}{$\displaystyle \vphantom{\frac12}#2$}%
  \endgroup%
}
\newcommand{\reg}[1]{\mathbox{regcol!20}{\;#1\;}}
\newcommand{\vol}[1]{\mathbox{volcol!20}{\;#1\;}}
\newcommand{\dd}{\mathrm{d}}

\title{A Regularization-Sharpness Tradeoff for Linear Interpolators}

\date{}

\author{%
          Qingyi Hu \\
  School of Mathematics and Statistics \\
  University of Melbourne, Australia \\
  \texttt{qingyih2@student.unimelb.edu.au}\\
  \and
  Liam Hodgkinson \\
  School of Mathematics and Statistics\\
  University of Melbourne, Australia\\
  \texttt{lhodgkinson@unimelb.edu.au}\\
 }

\hypersetup{
pdftitle={A Regularization--Sharpness Tradeoff for Linear Interpolators},
pdfauthor={Qingyi Hu, Liam Hodgkinson},
}

\begin{document}
\maketitle

\begin{abstract}
The rule of thumb regarding the relationship between the bias-variance tradeoff and model size plays a key role in classical machine learning, but is now well-known to break down in the overparameterized setting as per the double descent curve. In particular, minimum-norm interpolating estimators can perform well, suggesting the need for new tradeoff in these settings. 
Accordingly, we propose a regularization-sharpness tradeoff for overparameterized linear regression with an $\ell^p$ penalty. Inspired by the interpolating information criterion, our framework decomposes the selection penalty into a regularization term (quantifying the alignment of the regularizer and the interpolator) and a geometric sharpness term on the interpolating manifold (quantifying the effect of local perturbations), yielding a tradeoff analogous to bias–variance. Building on prior analyses that established this information criterion for ridge regularizers, this work first provides a general expression of the interpolating information criterion for $\ell^p$ regularizers where $p\ge 2$. Subsequently, we extend this to the LASSO interpolator with $\ell^1$ regularizer, which induces stronger sparsity. Empirical results on real-world datasets with random Fourier features and polynomials validate our theory, demonstrating how the tradeoff terms can distinguish performant linear interpolators from weaker ones.
\end{abstract}


\section{Introduction}
\emph{Statistical learning} is concerned with building predictors from limited data, and the key measure of goodness is how well a model generalizes to unseen inputs. Classical intuition is often summarized by the \emph{bias-variance tradeoff}, which suggests a U-shaped risk curve as the number of parameters grows, and motivates model selection tools such as the Akaike Information Criterion (AIC)~\citep{akaike_new_1974} and the Bayesian Information Criterion (BIC)~\citep{schwarz_estimating_1978}. 

Deep learning often operates far beyond this regime: overparameterized models can achieve (near) zero training error and still generalize well, and test error can exhibit \emph{double descent} around the interpolation threshold~\citep{zhang_understanding_2021, rolnick_deep_2018, belkin_reconciling_2019, nakkiran_deep_2019}.  Perfectly interpolating solutions in the overparameterized regime can still generalize effectively in certain settings. Classical information criteria break down under interpolation, motivating the search for new model selection principles in the overparameterized setting.

\begin{figure}
    \centering
    \includegraphics[width=0.55\linewidth]{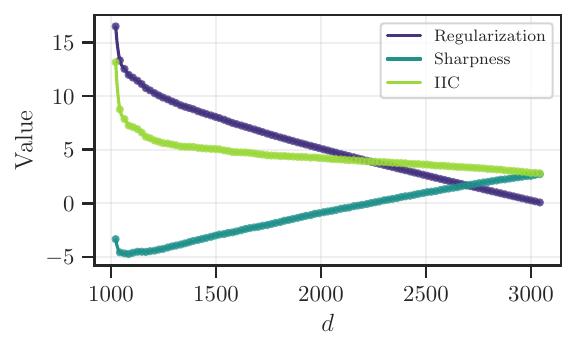}
    \caption{Decomposition of the Interpolating Information Criterion (green) for minimum $\ell^3$-norm interpolating solutions using random Fourier features as a \emph{tradeoff} between the effect of regularization (purple) and local sharpness (blue).}
    \label{fig:iic_intro}
\end{figure}

 Another prominent recent line of work explores conditions for \emph{benign overfitting}  and locates the interpolation peak in several models~\citep{bartlett_benign_2020, gunasekar_implicit_2017, wang_tight_2021}. The most significant of these developments for this work is the Interpolating Information Criterion (IIC)~\citep{hodgkinson_interpolating_2023}, which offers a view of information criteria that aligns with overparameterized settings. By leveraging a Bayesian duality principle, the marginal likelihood associated with the initial complex, high-dimensional model becomes equivalent to that of a lower-dimensional dual model. In the interpolation limit, this scenario simplifies considerably, and the marginal likelihood that is commonly used to measure model performance becomes more straightforward to approximate. 
 The IIC provides a principled way to compare overparameterized models while conditioning on fitting to the data exactly. Through the PAC-Bayesian framework, the IIC can also be readily tied to the more traditional notion of test risk ~\citep{hodgkinson_pac-bayesian_2023}. In doing so, the explicit factors influencing generalization error take shape, offering a clearer picture that is reminiscent of prior analyses. \emph{Sharpness}  is a popular type of measure describing the effect of local perturbations and commonly appears in bounds of this nature \citep{neyshabur_exploring_2017,keskar2016large}. It is known to correlate well with performance \citep{jiang2019fantastic}, although it does not tell the whole story \citep{dinh2017sharp}. Bias-variance-like tradeoffs also appear in a looser sense in benign overfitting bounds \citep{bartlett_benign_2020}, but fail to exhibit the clarity seen in precise asymptotics for ridge regression \citep{hastie_surprises_2022,mei_generalization_2020}. This is unfortunate, as neural networks may exhibit behavior more reminiscent of sparsity-inducing LASSO-type estimators \citep{geifman2020similarity}. 

Building on these ideas and the IIC, we present an explicit \emph{regularization-sharpness tradeoff} for linear interpolators with $\ell^p$ regularizers, as a replacement guiding principle to the classical bias--variance tradeoff in the overparameterized regime. This tradeoff relies on a decomposition of the IIC into two competing terms: (i) a regularizer term measuring alignment between the solution and an imposed prior belief, and (ii) a geometric sharpness term governed by the data geometry and the local shape of the loss landscape. An example highlighting the behavior of these terms in model size is shown in Figure \ref{fig:iic_intro}. Although the linear model is far less general than the IIC framework allows for\footnote{The original work of \citet{hodgkinson_interpolating_2023} applies to virtually all large-scale neural network models in practice.}, our primary objective is to highlight the generality of this tradeoff across regularizers (see Table \ref{tab:tradeoffs-summary} for a summary), and to explore how  the effect of the regularizer influences model performance through this lens. Assessing this factor turns out to be difficult, even in the setting of linear models. 
Our analysis derives the IIC expression for a general $\ell^p$ regularizer with $p\ge 2$, and then for the case $p=1$. The significant difference between these two cases is that the polyhedral geometry of $\ell^1$ induces \emph{sparsity}, but also breaks most of the original techniques used to derive the IIC in \citet{hodgkinson_interpolating_2023}. Consequently, an entirely new analysis is required. 

\clearpage
\paragraph{Contribution}

Our primary contribution is the theoretical and empirical demonstration of the following:
\begin{tcolorbox}
In the overparameterized regime, linear interpolators exhibits a tradeoff between
\begin{itemize}[leftmargin=*]
\item \vspace{-.1cm}\reg{\textbf{Regularization}}: quantifies the alignment of the regularizer and the interpolator (type of bias); and
\item \vspace{-.1cm}\vol{\textbf{Sharpness}}: quantifies the effect of local perturbations, including data, optimized parameters, and ambient sharpness (type of variance).
\end{itemize}
\end{tcolorbox}
As part of our analysis, we present:
\begin{itemize}
    \item A general expression of the \textbf{interpolating information criterion} with $\ell^p$ regularizers where $p\ge 2$ and extends this theory to the sparse case $p=1$.
    \item Empirical results with random Fourier features and polynomials demonstrate how the tradeoff terms can distinguish performant linear interpolators from weaker ones.
\end{itemize}

In Section \ref{sec:Background} we review background on generalization and information criteria, and in Section \ref{sec:Prelim} we set up the overparameterized linear model and its associated problem, and motivate the marginal likelihood as a measure of model quality. Section \ref{sec:IICLinear} presents the IIC formulas for $\ell^p$ penalties and the resulting regularization–sharpness decomposition. Finally, Section \ref{sec:Experiments} reports experiments using random Fourier features and polynomial regression.

\section{Background and Related Work}
\label{sec:Background}
\paragraph{Classical vs. Interpolating Information Criteria.} Classical criteria like AIC and BIC evaluate models based on likelihood, incorporating a penalty for complexity to improve model selection \citep{akaike_new_1974, schwarz_estimating_1978}. In a regular, correctly specified parametric family with fixed dimension and \emph{nonsingular} Fisher information, a Laplace approximation shows that BIC matches the negative log marginal likelihood (see Section \ref{sec:Prelim}) up to an additive constant. The penalty is proportional to the nominal parameter count and sample size, and is often interpreted as an \emph{Occam factor} that balances goodness of fit against model complexity \citep{mackay1992bayesian}. 
Larger models, such as deep networks, mixture models, and overparameterized regressions tend to be singular \citep{wei2022deep}: the likelihood has flat directions, the Fisher information is rank deficient, and the posterior is not asymptotically normal. The effective complexity differs from the parameter count, so classical BIC provides a poor approximation of Bayes evidence. Singular learning theory and criteria such as WAIC \citep{vehtari_practical_2017} and WBIC \citep{watanabe_widely_nodate} adjust penalties using likelihood geometry invariants, but still rely on large data, small model size limits that also fail in the overparameterized setting.
A recent development that directly addresses this gap is the Interpolating Information Criterion (IIC) \citep{hodgkinson_interpolating_2023}. Starting from the marginal likelihood, using a principle of \emph{Bayesian duality} that rewrites the evidence of a high-dimensional interpolating model as a lower-dimensional dual integral, one can condition on an exact fit to the data to yield an estimate of model evidence that can compare overparameterized models and regularizers even when classical Laplace arguments fail. Although IIC is defined through marginal likelihood, it also admits a learning-theoretic interpretation. Further work has highlighted that a PAC-Bayes risk bound can be written as a constant plus a complexity term that matches the IIC objective up to scaling and lower-order terms \citep{hodgkinson_pac-bayesian_2023}. This link is helpful because it frames interpolating evidence not only as a Bayesian score, but also as an explicit control on test risk.

\vspace{-.25cm}
\paragraph{Generalization Properties.}
Classical learning theory controls the gap between population risk and empirical risk by a complexity term such as the Vapnik--Chervonenkis dimension \citep{vapnik_uniform_2015} or Rademacher complexity \citep{bartlett_rademacher_2001}. These quantities reinforce the rule-of-thumb that one should balance underfitting and overfitting through a U–shaped bias–variance tradeoff \citep{hastie_elements_2009}. Once models can interpolate the training data, empirical error is essentially zero while standard complexity measures grow monotonically with model size. Consequently, uniform convergence bounds become vacuous even though large interpolating models can still generalize well in practice \citep{dziugaite2017computing}.

One response is to analyze the test risk directly in simple overparameterized settings. In random features regression and related high–dimensional linear models, precise asymptotic formulas for the prediction error have been derived as sample size and dimension grow, showing how the risk depends on the spectrum of the data covariance and on the ratios between sample size, ambient dimension and number of features \citep{mei_generalization_2020, hastie_surprises_2022}. These formulas exhibit the famous \emph{double descent curve}, in which test error peaks near the interpolation threshold and can decrease again in the highly overparameterized regime~\citep{belkin_reconciling_2019, nakkiran_deep_2019}. The precise shape of the curve is not especially critical in light of the fact that practical models can exhibit any variety of different risk curves \citep{chen2021multiple}. Instead, double descent highlights that risk can be controlled even when the model has many more parameters than data.

\begin{table*}[t]
\centering
\caption{Summary of Tradeoffs across Measures of Model Performance (Term 1 vs. Term 2)}
\label{tab:tradeoffs-summary}
\begin{adjustbox}{max totalsize={\textwidth}{\textheight},center}
\begin{tabular}{@{} l l l p{0.56\textwidth} @{}}
\toprule
\textbf{Name} & \textbf{Tradeoff} & \textbf{Tradeoff Term 1} & \textbf{Tradeoff Term 2} \\
\midrule
MSE & bias vs.\ variance &
$(\mathbb{E}\hat f(x)-f(x))^2$ &
$\mathbb{E}[f(\theta^\ast)-\mathbb E[f(\theta^\ast)]]^2$ \\

BIC & likelihood vs.\ penalty &
$-2\log L(\hat\theta)$ &
$d\log n$ \\

IIC ($\ell^2$) & regularization vs. sharpness &
$2\log\|\theta^\ast\|_2$ &
$\frac{1}{n}\log\det(XX^\top) -\log n$ \\

IIC ($\ell^{p}$, $p\ge2$) & regularization vs. sharpness &
$\frac{2d-p(d-n)}{n}\log\|\theta^{*}\|_{p}$ &
$\frac{1}{n}\log \det(XX^\top)+\frac{p-2}{n}\sum_{j=1}^d\log|\theta^\ast_j|+ K_1(p,d,n)$ \\

IIC ($\ell^{1}$) & regularization vs.\ sharpness &
$2\log\|\theta^{*}\|_{1}$ &
$\frac{1}{n}\log\det(XX^{\top})-\frac{2}{n}\log(V_0) +K_2(d,n)$ \\
\bottomrule
\end{tabular}
\end{adjustbox}
\end{table*}

A second line of work gives sharp non-asymptotic bounds for minimum–norm interpolators in linear regression. \citet{bartlett_benign_2020} characterize when the minimum $\ell^2$ norm interpolating predictor has near–optimal risk, in terms of two effective rank quantities of the covariance operator, and show that benign overfitting occurs when there is a long flat tail in the spectrum so that noise can be spread across many nearly null directions rather than concentrated in a few important ones. Their bounds naturally decompose into a term controlled by how the signal aligns with a low–dimensional leading subspace and a term controlled by the amount of sharpness in high–dimensional subspaces available for absorbing noise, already hinting at a regularization-geometry tradeoff. Extensions of this analysis to ridge regression show that adding very small explicit $\ell^2$ regularization can preserve benign behavior under weaker spectral conditions \citep{tsigler_benign_nodate}.
\vspace{-.25cm}

\paragraph{Sparse Interpolators.}
More recently, attention has turned to sparse interpolators and $\ell^1$ regularization. \citet{wang_tight_2021} obtain matching upper and lower bounds of order $\sigma^2/\log (d/n)$ for the prediction error of the minimum $\ell^1$ norm interpolator, or basis pursuit, under isotropic Gaussian designs and suitable sparsity conditions, establishing that exact interpolation can still be statistically consistent in high dimensions. \citet{ju_overfitting_2020} study basis pursuit in finite samples and show that the error of the minimum $\ell^1$ norm interpolator can exhibit its own double descent behavior as the feature–to–sample ratio varies, in a way that reflects the polyhedral geometry induced by $\ell^1$ constraints and differs from minimum $\ell^2$ norm interpolation. Taken together, these generalization results suggest a common pattern: risk is controlled by an interplay between a norm of the chosen interpolating solution and geometric quantities derived from the data covariance or feature design. The present work builds on this pattern and seeks to make the same regularization–sharpness tradeoff explicit at the level of interpolating information criteria. In doing so, we also aim to explain why the $\ell^1$ case fundamentally differs in its behavior with respect to model size.

\section{Notation and Preliminaries}
\label{sec:Prelim}
Let $(x_i, y_i)_{i=1}^n$ be independent and identically distributed samples from an unknown data-generating distribution $\mathcal{D}$ on $\mathbb{R}^d \times \mathbb{R}$. We denote by
$X = (x_1^\top,\dots,x_n^\top)^\top \in \mathbb{R}^{n \times d}
$ and $Y = (y_1,\dots,y_n)^\top \in \mathbb{R}^n$ the design matrix and response vector, respectively. A (scalar-valued) predictor is a function $f:\mathbb{R}^d \to \mathbb{R}$; its performance is measured by the population risk (generalization error)
\[
L(f) = \mathbb{E}_{(x,y)\sim \mathcal{D}} \ell(f(x),y),
\]
where $\ell(\hat{y}, y)$ is a loss function. Since $\mathcal{D}$ is unknown, one often works with the empirical risk
\[
\ell_n(f) = \frac{1}{n}\sum_{i=1}^n \ell(f(x_i),y_i).
\]
Throughout this paper, we use the squared loss $\ell(\hat{y},y) = (y - \hat{y})^2$, so that $\ell_n(f)$ corresponds to the training mean squared error (MSE). In the interpolating regime, many predictors achieve (near) zero empirical risk, and model comparison therefore relies on differences in the population risk, rather than on training performance. 

\subsection{Linear Regression}

We specialize to linear predictors $f_\theta(x) = x^\top \theta$ with parameter $\theta \in \Theta \subseteq \mathbb{R}^d$ and consider the Gaussian linear model $Y = X \theta + \epsilon$ where $\epsilon \sim \mathcal{N}(0, \gamma/2 I_n)$. Under this model\footnote{The precise reason for this parameterization is to ensure that the likelihood becomes equivalent to the Gibbs likelihood for the mean-squared error loss with temperature $\gamma > 0$}, the likelihood function is given by
\begin{align*}
p(Y|X,\theta) &= (\pi \gamma)^{-\frac{n}{2}}\exp\left (-\frac{1}{\gamma}\sum_{i=1}^n (y_i-x_i^\top \theta)^2\right)\\
&= c_{n,\gamma} e^{-\frac{1}{\gamma} \ell_n(\theta)},
\end{align*}
where $\ell_n(\theta)$ is the empirical risk in the predictor $f_\theta$ and $c_{n,\gamma} = (\pi \gamma)^{-\frac{n}{2}}$ is the normalizing constant for the likelihood. In the overparameterized regime $d > n$, there are infinitely many maximum likelihood estimators (all of which achieve zero loss on the training set), and so an additional criterion is required to select a unique predictor. To select among interpolating solutions, we adopt a Bayesian formulation with an $\ell^p$-type regularization induced by a generalized Gaussian prior
\[
\pi(\theta)
= c_{p,\tau}
\exp\left(-\frac{1}{\tau}\|\theta\|_p^p\right),\;
c_{p,\tau}^{-1}
= 2^d \Gamma\!\left(\frac{1}{p}+1\right)^d \tau^{\frac{d}{p}},
\]
where $\tau>0$ controls the strength of regularization. For $p=2$, this reduces to the standard Gaussian prior, while other values of $p$ promote different notions of simplicity.


\subsection{Importance of Marginal Likelihood}
The marginal likelihood is defined as the normalized constant of the posterior distribution
\begin{align*}
Z_n(M) = p(Y\mid X, M) &= \int_\Theta \pi(\theta\mid M)p(Y\mid X,\theta,M)\dd \theta\\
&= \int_{\mathbb{R}^d} \pi(\theta)c_{n,\gamma}e^{-\frac{1}{\gamma}\ell_n(\theta)}\dd \theta.
\end{align*}
A literal interpretation of $Z_n$ is an aggregation of the fits of all parameter values weighted by the prior. For more complex predictors $\hat{f}$, computing $Z_n$ rapidly becomes intractable, especially when the number of parameters $d$ is large. While computing $Z_n$ is often unnecessary for estimation, it is valued for model comparison. Suppose that $p(M \mid Y,X)$ denotes the probability that the model $M$ is ``accurate'' given data $X, Y$. From Bayes' Theorem,
\[
p(M\mid Y,X)\propto Z_n(M) p(M),
\]
where $p(M)$ is a prior probability on the accuracy of the model $M$. Provided that models should be considered equally viable \emph{a priori}, $p(M)$ is constant, and model selection becomes equivalent to identifying the model $M$ with the largest $Z_n(M)$. 

One further application is \emph{model averaging}, whereby a more stable prediction for a quantity of interest $\Delta$ can be made if one has a collection of models $M_i$ \emph{and} their marginal likelihoods~\citep{hoeting_bayesian_nodate}. Indeed, taking $p(M)$ to be constant, we have 
\[
p(\Delta\mid X, Y) \propto \sum_{i=1}^m p(\Delta\mid X,Y,M_i)Z_n(M_i).
\]
Accurate evaluation of $Z(M)$ matters not only for selecting a single model, but also for producing stable predictions that account for model uncertainty.



\subsection{Generalization Error and PAC-Bayes}

Despite its intrinsic interpretation as a measure of model quality, the marginal likelihood is not a popular object in the machine learning literature \citep{lotfi_bayesian_2023}. Instead, direct estimates of the population risk, including the test loss, are much more common. To connect generalization to the marginal  likelihood, one can appeal to PAC-Bayes bounds \citep{mcallester_simplified_2003}. The following is presented in \citet[Corollary 4]{germain2016pac}: if the empirical risk $\ell_n(\theta)$ is subgaussian over $\mathcal{D}$ with variance factor $s^2$, then for any probability density $\rho$ of ``trained'' model parameters and any density $\pi$ of ``initialized'' model parameters, with probability at least $1 - \delta$, 
\begin{equation}
\label{eq:PACBayes}
\mathbb{E}_{\theta \sim \rho} L(f_\theta) \leq \mathbb{E}_{\theta \sim \rho} \ell_n(\theta) + \frac{\mathrm{KL}(\rho \Vert \pi) - \log \delta + \tfrac12 s^2}{\sqrt{n}},
\end{equation}
where $\mathrm{KL}(\cdot \Vert \cdot)$ is the Kullback--Leibler divergence. In the interpolating regime where $\rho$ has support on the set of interpolators and $\mathbb{E}_{\theta \sim \rho} \ell_n(\theta) = 0$, the density $\rho$ that minimizes the right-hand side of (\ref{eq:PACBayes}) is the limit of the Bayesian posterior $\rho^\ast(\theta) \propto p(Y \vert X, \theta) \pi(\theta)$ as $\gamma \to 0^+$. For this choice, 
\[
\mathrm{KL}(\rho^\ast \Vert \pi) = -\log Z_n \eqqcolon \mathcal{F}_n,
\]
called the \emph{Bayes free energy}, and so with probability at least $1 - \delta$,
\[
\mathbb{E}_{\theta \sim \rho^\ast} L(f_\theta) \leq \frac{\mathcal{F}_n - \log \delta + \frac12 s^2}{\sqrt{n}}.
\]
Consequently, any approximation of the negative log-marginal likelihood (including the soon-to-be-introduced IIC), directly controls an upper bound on the population risk \citep{hodgkinson_pac-bayesian_2023}. 



\section{IIC for Linear Interpolators}
\label{sec:IICLinear}

Modern overparameterized models are capable of finding a solution with zero error on all training samples; this is \emph{interpolation}. In our setting, an \emph{interpolator} is a predictor $f$ such that for input-output pairs $(x_i,y_i)$, $f(x_i) = y_i$ for $i=1,...,n$.  In what follows, we let $\mathcal M$ denote the collection of all interpolators
\[
\mathcal M := \{\theta\in \Theta:f(x_i,\theta)=y_i\text{ for all }i=1,...,n\}.
\]
In practice, regularizers are a standard tool to mitigate overfitting, and provide a useful selection mechanism for a high-quality interpolator. An interpolator with a regularizer $R(\theta)$ can be defined as a solution to
\begin{equation}
    \theta \in \argmin{\theta \in \Theta} R(\theta)\quad \text{subject to}\quad \theta\in\mathcal M.
    \label{INT}
\end{equation}
In the following, we will take $R(\theta) = \|\theta\|_p^p$. 

\begin{definition}[IIC \citep{hodgkinson_interpolating_2023}]
\label{def:IIC}
    The interpolating information criterion is
    \[
    \mathrm{IIC} \simeq \frac{2}{n} \inf_{\tau>0} \bar{\mathcal F}_{n,\tau},
    \]
    where $\bar{\mathcal F}_{n,\tau}$ is the first-order approximation in $\gamma$ and $\tau$ of the free energy $\mathcal F_{n,\tau} = -\log Z_n$. 
\end{definition}
In this definition, we use $\simeq$ to denote equivalence after discarding any additive term that is constant with respect to $n$ and $d$. This removal focuses attention on the components that are directly relevant to model selection.

For this section, we aim to compute the IIC for linear interpolators and highlight the factors that determine its behaviour. It is important to note that our results appear in the unified form
\begin{equation}
\label{eq:IIC}
\mathrm{IIC}=\reg{A\log R(\theta^\ast)}+\vol{B}
\end{equation}
where $R(\theta)$ is the regularizer that selects the interpolating solution and $\theta^\ast$ is the unique minimizer of $R(\theta)$ subject to $X\theta=Y$. We define $A\log R(\theta^\ast)$ as the \tbox{regcol!20}{regularization} contribution, since it depends on the particular interpolating solution induced by $R$. We define $B$ as the \tbox{volcol!20}{sharpness} contribution. Following \citet{neyshabur_exploring_2017}, sharpness measures how much the empirical loss can increase under small, adversarial perturbations in parameter space, so it captures the local sensitivity of the solution. In our linear setting, this local sensitivity is governed by the design geometry and is encoded by quantities such as $XX^\top$. Each theorem in the following subsections specifies the corresponding pair $(A,B)$ for a given choice of $p$.

Computing the IIC involves estimating the Bayes free energy $\mathcal F_{n,\tau}$, which requires an approximation of the marginal likelihood of the overparameterized model. To avoid the degeneracy of second-order approximations in the overparameterized setting, we rely on \emph{Bayesian duality} (Lemma \ref{lemma:bayesian_duality}; a special case of \citet[Proposition 1]{hodgkinson_interpolating_2023}), which provides a way to replace the original marginal likelihood by a dual quantity that is more tractable.

\begin{lemma}[Bayesian Duality]
\label{lemma:duality_asymptotic}
There exists a dual model with a dual prior $\pi^\ast$ and a dual likelihood $p^\ast(Z) = c_{n,\gamma} e^{-\frac{1}{\gamma}\sum_{i=1}^n \ell(z_i,y_i)}$ whose marginal likelihood $Z_n^\ast$ satisfies $Z_n = Z_n^\ast$. Consequently, as $\gamma \to 0^+$, $Z_n \to \pi^\ast(Y)$. 
\end{lemma}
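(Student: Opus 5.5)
The plan is to build the dual model as the pushforward of the prior through the linear map $\theta \mapsto X\theta$. Set $\pi^\ast$ to be the law of $Z = X\theta \in \mathbb{R}^n$ when $\theta \sim \pi$. Because the likelihood depends on $\theta$ only through the fitted values $X\theta$, the integral defining $Z_n$ becomes an integral over $Z$. Concretely,
\[
Z_n = \int_{\mathbb{R}^d} \pi(\theta)\, c_{n,\gamma} e^{-\frac{1}{\gamma}\sum_i (y_i - x_i^\top\theta)^2}\,\dd\theta = \mathbb{E}_{\theta\sim\pi}\bigl[p^\ast(X\theta)\bigr] = \int_{\mathbb{R}^n} \pi^\ast(z)\, p^\ast(z)\,\dd z = Z_n^\ast.
\]
Here $p^\ast(z) = c_{n,\gamma} e^{-\frac1\gamma \sum_i \ell(z_i,y_i)}$ with the squared loss. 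This is just the change-of-variables (law of the unconscious statistician) identity, so it holds for every $\gamma>0$ and needs no regularity beyond measurability.

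Next I will check that $\pi^\ast$ has a density with respect to Lebesgue measure on $\mathbb{R}^n$, so that the pointwise expression $\pi^\ast(Y)$ makes sense. I assume $X$ has full row rank $n$ (generic in the overparameterized regime $d>n$). Choose an orthonormal completion: write $\theta = X^\top (XX^\top)^{-1} z + N w$, where the columns of $N$ form an orthonormal basis of $\ker X$ and $w \in \mathbb{R}^{d-n}$. The coarea/linear change of variables then gives
\[
\pi^\ast(z) = \det(XX^\top)^{-1/2} \int_{\{\theta : X\theta = z\}} \pi(\theta)\, \dd\mathcal{H}^{d-n}(\theta).
\]
This is finite because $\pi$ is integrable with exponential tails. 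It is also continuous in $z$, since the integrand $\exp(-\|\theta\|_p^p/\tau)$ is continuous and dominated uniformly for $z$ in a compact set by an integrable function of $w$; the domination uses $\|\theta\|_p^p \ge c\|w\|^p - C(1+\|z\|^p)$.

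For the limit $\gamma\to 0^+$, I observe that $c_{n,\gamma} e^{-\|z - Y\|_2^2/\gamma} = (\pi\gamma)^{-n/2} e^{-\|z-Y\|^2/\gamma}$ is exactly the $\mathcal{N}(Y, \tfrac{\gamma}{2} I_n)$ density in $z$. Hence $Z_n^\ast = (\pi^\ast * \varphi_\gamma)(Y)$ is a Gaussian mollification of $\pi^\ast$ evaluated at $Y$. Since the Gaussian kernel is an approximate identity, and $\pi^\ast$ is continuous at $Y$ and bounded (boundedness follows from the same domination bound applied globally), standard approximate-identity arguments give $Z_n \to \pi^\ast(Y)$.

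The main obstacle is the regularity of $\pi^\ast$, namely its continuity and boundedness at $Y$. That is where the full-rank assumption on $X$ and the tail behaviour of the generalized Gaussian prior are genuinely used. For $p\ge 1$ I will handle this through the dominated convergence bound above on the affine fibre $\{X\theta = z\}$.
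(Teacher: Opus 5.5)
This is correct and is essentially the paper's argument. The paper takes $\pi^\ast$ from the coarea-formula construction of \citet{hodgkinson_interpolating_2023}, which for the linear map $\theta\mapsto X\theta$ is exactly your pushforward density $\det(XX^\top)^{-1/2}\int_{X\theta=z}\pi\,\dd\mathcal{H}^{d-n}$; you go further by actually justifying the $\gamma\to0^+$ limit as a Gaussian approximate identity, which the paper only asserts. One small slip: the domination bound does not give boundedness of $\pi^\ast$, since it only yields $\pi^\ast(z)\lesssim e^{C\|z\|^p/\tau}$. You do not need boundedness, though: continuity at $Y$ together with $\int\pi^\ast=1$ suffices, because $\sup_{\|u\|\ge\delta}(\pi\gamma)^{-n/2}e^{-\|u\|^2/\gamma}\to0$ controls the tail of the convolution.
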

This result is central to our analysis. It transforms the problem of calculating IIC by evaluating a complex, high-dimensional integral for $Z_n$ into the problem of analyzing the dual prior $\pi^\ast(Y)$. Applying Lemma \ref{lemma:duality_asymptotic} requires taking $\gamma \to 0^+$, putting infinitely large weight on the loss function inside the likelihood, and concentrating the integral around the set of interpolators $\mathcal{M}$. If $\mathcal{H}$ is Hausdorff measure, the dual prior is given by
\begin{align}
\label{eq:target_integral}
\pi^*(Y) &= \frac{1}{\sqrt{\det (XX^\top)}}\int_{X\theta=Y}\pi(\theta)\dd \mathcal H(\theta)\\
\label{eq:target_integral_with_Q}
&= \frac{1}{\sqrt{\det (XX^\top)}}\cdot \int_{\mathbb R^{d-n}} e^{-\frac{1}{\tau}\|X^+Y + Qw\|_p^p}\dd w,
\end{align}
where $Q \in \mathbb R^{d\times(d-n)}$ is a semi-orthogonal matrix such that $\ker (X)=\mathrm {Range} (Q)$, mapping $\mathbb{R}^{d-n}$ into the interpolating set $\mathcal{M}$. The rest of our analysis focuses on estimating this quantity as $\tau \to 0^+$ to compute the IIC. To do so, we require the following assumption. 
\begin{assumption}
\label{assum:unique}
Assume the design matrix $X\in\mathbb R^{n\times d}$ has full row rank $n$ where $d\ge n$. Additionally, assume that there is a unique solution $\theta^\ast$ that solves the minimization problem $\arg\min_{\theta\in\mathbb R^d} \|\theta\|_p \text{ subject to }X\theta = Y$.
\end{assumption}
The full row rank assumption is equivalent to the statement that $XX^\top$ is positive definite and its determinant is strictly positive. 

\subsection{IIC with Ridge Regularization}
First, we mention the simplest case where $p = 2$. This scenario was already considered in \citet{hodgkinson_interpolating_2023}, where the was given by Theorem \ref{thm:pi_p2} below.
\begin{theorem}[\citet{hodgkinson_interpolating_2023}]
\label{thm:pi_p2}
If $p=2$,
\[
\mathrm{IIC} = \reg{\log \|\theta^\ast\|^2_2} +\vol{\frac{1}{n}\log\det(XX^\top) -\log n}.
\]
\end{theorem}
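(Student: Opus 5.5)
The plan is to compute the dual prior $\pi^\ast(Y)$ in closed form, which is possible because for $p=2$ the integral in \eqref{eq:target_integral_with_Q} is Gaussian. We then minimise the resulting free energy over $\tau$ explicitly.

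\textbf{Step 1: evaluate $\pi^\ast(Y)$.} By Lemma~\ref{lemma:duality_asymptotic}, as $\gamma\to0^+$ we have $Z_n\to\pi^\ast(Y)$, with $\pi^\ast(Y)$ given by \eqref{eq:target_integral}. When writing \eqref{eq:target_integral_with_Q} I will keep the prior normalising constant. For $p=2$ we have $2\Gamma(3/2)=\sqrt{\pi}$, so $c_{2,\tau}=(\pi\tau)^{-d/2}$. The key observation is that $\theta^\ast=X^+Y$ lies in the row space of $X$, which is the orthogonal complement of $\ker X=\mathrm{Range}(Q)$. Since $Q$ is semi-orthogonal, Pythagoras gives
\[
\|X^+Y+Qw\|_2^2=\|\theta^\ast\|_2^2+\|w\|_2^2 .
\]
Here we also use that the minimum $\ell^2$-norm interpolator is $X^+Y$, which is unique under Assumption~\ref{assum:unique}. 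The $w$-integral is then a standard Gaussian integral equal to $(\pi\tau)^{(d-n)/2}$. Hence
\[
\pi^\ast(Y)=\det(XX^\top)^{-1/2}(\pi\tau)^{-n/2}e^{-\|\theta^\ast\|_2^2/\tau}.
\]
This is exact in $\tau$, so the first-order approximation $\bar{\mathcal F}_{n,\tau}$ in Definition~\ref{def:IIC} coincides with the $\gamma\to0^+$ limit of the free energy, namely
\[
\bar{\mathcal F}_{n,\tau}=\tfrac12\log\det(XX^\top)+\tfrac n2\log(\pi\tau)+\|\theta^\ast\|_2^2/\tau .
\]

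\textbf{Step 2: optimise over $\tau$.} Differentiating in $\tau$ gives
\[
\frac{\partial\bar{\mathcal F}_{n,\tau}}{\partial\tau}=\frac{n}{2\tau}-\frac{\|\theta^\ast\|_2^2}{\tau^2}.
\]
This vanishes at the unique critical point $\tau^\ast=2\|\theta^\ast\|_2^2/n$, which is a global minimum because $\bar{\mathcal F}_{n,\tau}\to\infty$ both as $\tau\to0^+$ and as $\tau\to\infty$. Substituting $\tau^\ast$ gives
\[
\bar{\mathcal F}_{n,\tau^\ast}=\tfrac12\log\det(XX^\top)+\tfrac n2\log\!\Big(\tfrac{2\pi\|\theta^\ast\|_2^2}{n}\Big)+\tfrac n2 .
\]

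\textbf{Step 3: scale and drop constants.} Multiplying by $2/n$ yields
\[
\frac1n\log\det(XX^\top)+\log\|\theta^\ast\|_2^2-\log n+\log(2\pi)+1 .
\]
The last two terms are constant in $n$ and $d$, so they are discarded under $\simeq$. What remains is exactly the claimed regularization term $\log\|\theta^\ast\|_2^2$ plus the sharpness term $\frac1n\log\det(XX^\top)-\log n$.

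\textbf{Main obstacle.} There is no real analytic obstacle. The step needing care is the bookkeeping of constants, in particular that $c_{2,\tau}$ contributes $\tau^{-d/2}$, which the $w$-integral partially cancels down to $\tau^{-n/2}$. Getting this power wrong would change the optimal $\tau$ and the coefficient of $\log\|\theta^\ast\|_2^2$. The other point to check is the orthogonality $X^+Y\perp\mathrm{Range}(Q)$, which is what makes the exponent separate.
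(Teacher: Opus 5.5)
Your proposal is correct and follows essentially the same route as the paper. The paper applies the Laplace approximation to $h(w)=\|X^+Y+Qw\|_2^2$, with Hessian $2I_{d-n}$ and minimum value $\|X^+Y\|_2^2$ obtained from the same orthogonality $Q^\top X^+Y=0$; because this exponent is exactly quadratic, that is the same as your direct Gaussian integral after the Pythagorean split, and the paper then minimises over $\tau$ to get $\tau^\ast=\frac{2}{n}\|X^+Y\|_2^2$ and drops the constants $\log(2\pi)+1$ just as you do, your version being slightly cleaner in making $\pi^\ast(Y)$ exact in $\tau$ and tracking $c_{2,\tau}=(\pi\tau)^{-d/2}$ explicitly.
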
 
Immediately, the IIC can be decomposed into a regularization term and a sharpness term. The term $\log\|\theta^\ast\|_2^2$ consists of the regularization strength by the optimal parameter estimator. Analogous quantities commonly appear as the ``bias'' term in asymptotics of mean-squared error for $\ell^2$ linear interpolators \citep{hastie2022surprises}. For linear models, the second term is a rescaling of
\[
\log \det (XX^\top) = \log \textstyle\det_+ I_{\theta^\ast},
\]
where $I_{\theta^\ast}$ is the empirical Fisher information matrix, and $\det_+$ is the pseudodeterminant, or the product of all nonzero eigenvalues. This term clearly describes the sharpness in the loss landscape, measuring effects of local perturbations. These two terms are evidently in tension: since $\theta^\ast = X^+ Y$ where $X^+$ is the Moore--Penrose pseudoinverse of $X$, decreasing the scale of $X$ will increase the regularization term, but decrease the sharpness term. 

\subsection{IIC with Smooth Regularization}
Our next case considers $p \geq 2$, which requires more work than the $p = 2$ case, but the Laplace approximation machinery presented in \citet{hodgkinson_interpolating_2023} still applies. The following Theorem \ref{thm:pi_p_ge_2} is our first novel result. 
\begin{theorem}
\label{thm:pi_p_ge_2}
If $p\ge 2$ and $d\le \frac{np}{p-2}$, 
\begin{align*}
\mathrm{IIC} = \mathbox{regcol!20}{\;\frac{2d-p(d-n)}{np} \log \|\theta^\ast\|_p^p\;} 
+ \mathbox{volcol!20}{\;\frac{1}{n}\log \det(XX^\top)+\frac{p-2}{n}\sum_{j=1}^d\log|\theta^\ast_j|+ K_1(p,d,n)\;},
\end{align*}
where $K_1(p,d,n)$ is presented in (\ref{eq:K1Const}) in Appendix \ref{sec:pf_IIC_pge2}. 
\end{theorem}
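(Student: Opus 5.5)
Starting from Lemma~\ref{lemma:duality_asymptotic}, the free energy in the limit $\gamma\to0^+$ is $-\log\pi^\ast(Y)$. We evaluate $\pi^\ast(Y)$ through (\ref{eq:target_integral_with_Q}), remembering to restore the prior normalizing constant $c_{p,\tau}$:
\[
\pi^\ast(Y)=\frac{c_{p,\tau}}{\sqrt{\det(XX^\top)}}\int_{\mathbb R^{d-n}} e^{-\frac{1}{\tau}\phi(w)}\,\dd w,\qquad \phi(w)=\|X^+Y+Qw\|_p^p .
\]
Under Assumption~\ref{assum:unique}, the function $\phi$ is strictly convex along the affine set $\mathcal M$ and has a unique minimizer $w^\ast$, with $X^+Y+Qw^\ast=\theta^\ast$. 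For $p\ge 2$ it is $C^2$. We then apply Laplace's method as $\tau\to0^+$:
\[
\int e^{-\phi(w)/\tau}\dd w\approx e^{-\|\theta^\ast\|_p^p/\tau}(2\pi\tau)^{\frac{d-n}{2}}\det(\nabla^2\phi(w^\ast))^{-1/2}.
\]
The Hessian is $\nabla^2\phi(w^\ast)=p(p-1)\,Q^\top D Q$, where $D=\mathrm{diag}(|\theta_j^\ast|^{p-2})$. To handle $\det(Q^\top DQ)$, we use the complementary-subspace determinant identity for an orthogonal decomposition $\mathbb R^d=\ker X\oplus \mathrm{Range}(X^\top)$:
\[
\det(Q^\top D Q)=\det D\cdot \frac{\det(XD^{-1}X^\top)}{\det(XX^\top)}.
\]
This produces the term $\sum_j\log|\theta^\ast_j|^{p-2}$ in $\log\det D$. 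The remaining factor $\det(XD^{-1}X^\top)$ we would either bound or treat by a further approximation. It is the main obstacle, since it couples the data and $\theta^\ast$. A concrete first attempt is to use the KKT conditions: they give $\mathrm{sign}(\theta^\ast_j)|\theta^\ast_j|^{p-1}=(X^\top\lambda)_j$ with $\lambda^\top Y=\|\theta^\ast\|_p^p$. We would try to express $XD^{-1}X^\top$ through this multiplier, so that it contributes only to the $\log\det(XX^\top)$ term, the $\log\|\theta^\ast\|_p$ term, and constants absorbed into $K_1$. If that fails, we would hope the theorem's sharpness term implicitly keeps only $\det D$ and $\det(XX^\top)$ and discards the rest as lower order. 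We also need to check that zero coordinates $\theta^\ast_j=0$ are excluded generically, since otherwise $D$ is singular when $p>2$.

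Collecting terms, the free energy becomes
\[
\bar{\mathcal F}_{n,\tau}=\frac{\|\theta^\ast\|_p^p}{\tau}+\Big(\frac dp-\frac{d-n}{2}\Big)\log\tau+\tfrac12\log\det(XX^\top)+\frac{p-2}{2}\sum_j\log|\theta_j^\ast|+\text{const}.
\]
Here the $\frac dp\log\tau$ contribution comes from $c_{p,\tau}^{-1}$ and the $-\frac{d-n}{2}\log\tau$ contribution from Laplace's method. The constant collects $d\log(2\Gamma(1/p+1))$, $-\frac{d-n}{2}\log(2\pi)$ and $\frac{d-n}{2}\log(p(p-1))$.

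The coefficient $\alpha=\frac dp-\frac{d-n}{2}=\frac{2d-p(d-n)}{2p}$ is nonnegative exactly when $d\le np/(p-2)$, which is the hypothesis of the theorem. With $\alpha\ge0$, the infimum over $\tau$ is finite and attained at $\tau^\ast=\|\theta^\ast\|_p^p/\alpha$, because $\frac{R}{\tau}+\alpha\log\tau$ is minimized there with value $\alpha+\alpha\log R-\alpha\log\alpha$. Multiplying by $2/n$ gives the regularization coefficient $\frac{2\alpha}{n}=\frac{2d-p(d-n)}{np}$ on $\log\|\theta^\ast\|_p^p$. It also gives the sharpness terms $\frac1n\log\det(XX^\top)+\frac{p-2}{n}\sum_j\log|\theta_j^\ast|$. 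All remaining $(p,d,n)$-dependent constants, including $\frac{2\alpha}{n}(1-\log\alpha)$ and the Gamma/$2\pi$/$p(p-1)$ terms, form $K_1(p,d,n)$. As a sanity check, setting $p=2$ recovers Theorem~\ref{thm:pi_p2}, including the $-\log n$ term coming from $-\log\alpha$ with $\alpha=n/2$.
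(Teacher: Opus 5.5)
Up to the Hessian, your outline is the paper's proof. It uses duality, then Laplace's method for $h(w)=\|X^+Y+Qw\|_p^p$ with $\nabla^2h(w^\ast)=p(p-1)Q^\top\Lambda_p^\ast Q$, then the same minimization in $\tau$. Your $\tau^\ast=\|\theta^\ast\|_p^p/\alpha$ and the leftover $\frac{2\alpha}{n}\log(e/\alpha)$ reproduce the last summand of (\ref{eq:K1Const}).

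The gap is the one you flag yourself. You never remove the factor $\det(X\Lambda_p^{\ast-1}X^\top)/\det(XX^\top)$ in $\det(Q^\top\Lambda_p^\ast Q)$, and your fallback of discarding it ``as lower order'' is not legitimate. The factor is independent of $\tau$ but depends on the data, so it is of exactly the same order as the $\log\det(XX^\top)$ and $\sum_j\log|\theta_j^\ast|$ terms you keep. It also cannot be hidden in $K_1$, which depends only on $(p,d,n)$. The KKT idea does not remove it either. With $|\theta_j^\ast|^{p-1}=|x_j^\top\lambda|$ ($x_j$ the $j$-th column of $X$), one gets $X\Lambda_p^{\ast-1}X^\top=\sum_j|x_j^\top\lambda|^{-(p-2)/(p-1)}x_jx_j^\top$, a genuinely different matrix from $XX^\top$.

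In fact no argument can close this gap, because your complementary-subspace identity is correct and the paper's treatment of this step is not. The paper claims $\det(Q^\top\Lambda_p^\ast Q)=\prod_j|\theta_j^\ast|^{p-2}$ by rewriting $\det(\lambda I+Q^\top\Lambda_p^\ast Q)$ as $\det(\lambda I+Q^\top Q\Lambda_p^\ast)$ via Weinstein--Aronszajn. But $Q^\top Q\Lambda_p^\ast$ is not defined: it is a $(d-n)\times(d-n)$ matrix times a $d\times d$ one. The identity actually gives $\lambda^{-n}\det(\lambda I_d+\Lambda_p^\ast QQ^\top)$, where $QQ^\top$ is a rank-$(d-n)$ projection rather than $I_d$.

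Concretely, take $n=1$, $d=2$, $p=3$, $X=(1,1)$, so $d<np/(p-2)=3$. Then $\theta^\ast=(Y/2,Y/2)$, $Q=(1,-1)^\top/\sqrt2$, and $Q^\top\Lambda_p^\ast Q=|Y/2|$, while $\prod_j|\theta_j^\ast|^{p-2}=|Y/2|^2$. So the stated IIC is off by the data-dependent amount $\log|Y/2|$.

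Carrying your identity through, the two factors of $\det(XX^\top)$ cancel, and what the Laplace argument actually yields is
\[
\mathrm{IIC}=\frac{2d-p(d-n)}{np}\log\|\theta^\ast\|_p^p+\frac1n\log\det\bigl(X\Lambda_p^{\ast-1}X^\top\bigr)+\frac{p-2}{n}\sum_{j=1}^d\log|\theta_j^\ast|+K_1(p,d,n).
\]
This agrees with Theorem~\ref{thm:pi_p2} at $p=2$, but matches the stated formula only when $\det(X\Lambda_p^{\ast-1}X^\top)=\det(XX^\top)$.

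Separately, at the endpoint $d=np/(p-2)$ you have $\alpha=0$ and the infimum over $\tau$ is not attained. You therefore need strict inequality for $\tau^\ast$ to exist.
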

The function $K_1(p,d,n)$ only consists of the dimension parameters and represents an ambient volume term. While the same decomposition holds as in the $p = 2$ case, we observe that there is a peculiar upper bound on the model dimension. This is to ensure that the minimum point in Definition~\ref{def:IIC} exists. If $p > 2$ and $d$ is sufficiently large, concentrating mass around the interpolator point estimate maximizes the marginal likelihood, analogous to the \emph{cold posterior effect} \citep{wenzel2020good}. Since there is no appropriate sense of intrinsic uncertainty in this setting, we consider this to be a case where the marginal likelihood approach fails, and a more direct approach may be necessary. The sharpness term inherits the same log-determinant from the $p = 2$ case with an additional term $\frac{p-2}{n} \sum_{j=1}^d\log |\theta_j^*|$ involving the model parameters as well. This extra factor arises from the curvature of the $\ell^p$ prior at the interpolator and encodes how mass is distributed along different coordinate directions. As $p$ grows, the regularizer puts more weight on large coefficients, which makes the regularization term decrease faster and amplifies the influence of the parameter-sharpness term in the overall tradeoff.

\subsection{IIC with Sparse Regularization}
The final case considers $p = 1$, invoking a regularizer that induces sparsity. This case is quite different in nature from the $p \geq 2$ setting, as the machinery in \citet{hodgkinson_interpolating_2023} no longer applies. Our strategy here takes advantage of equivalent characterizations of uniqueness (guaranteed in Assumption \ref{assum:unique}) coming from basis pursuit theory \citep{zhang_necessary_2015}. 
\begin{theorem}
\label{thm:iic_peq1}
The IIC for $\ell^1$ interpolators ($p=1$) satisfies
\begin{align*}
\mathrm{IIC}= \reg{2\log \|\theta^\ast\|_1 } 
+ \vol{\frac{1}{n}\log\det(XX^\top)-\frac{2}{n}\log (V_0)+K_2(d,n)},
\end{align*}
where $V_0 = \int_{\ker X} \mathds 1\left\{\|z'_C\|_1 + s\cdot z'_S\le 1\right\}\dd z'$ is finite, $S = \mathrm{supp}(\theta^\ast)$, $C = \{1,\dots,d\} \setminus S$, and $K_2(d,n)$ is presented in (\ref{eq:K2}) in Appendix \ref{sec:pf_IIC_p1}. 

Furthermore, if $|S| = n$, then
\[
V_0 = \frac{2^{d-n}\sqrt{\det(I+\Psi^{\top}\Psi)}}{(d-n)!}\cdot\prod_{k=1}^{d-n}\frac{1}{1-\psi_{k}^{2}},
\]
where $\Psi = X_S^{-1} X_C$, $X_S$ is the matrix $X$ keeping only the column indices in $S$ and likewise for $X_C$, and $\psi~=~\Psi^\top \mathrm{sign}(\theta^\ast)$. 
\end{theorem}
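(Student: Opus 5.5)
The plan is to start from the dual-prior representation (\ref{eq:target_integral_with_Q}) with $p=1$. We parametrize the interpolating set as $\theta^\ast + z$ with $z\in\ker X$, so that
\[
\pi^\ast(Y) = \frac{c_{1,\tau}}{\sqrt{\det(XX^\top)}}\int_{\ker X} e^{-\frac{1}{\tau}\|\theta^\ast+z\|_1}\,\dd z .
\]

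\textbf{Step 1: exact first-order expansion.} For small $z$ the map $z\mapsto\|\theta^\ast+z\|_1$ is exactly piecewise linear:
\[
\|\theta^\ast+z\|_1=\|\theta^\ast\|_1 + s^\top z_S + \|z_C\|_1, \qquad s=\mathrm{sign}(\theta^\ast_S).
\]
This holds as long as $|z_j|<|\theta^\ast_j|$ on $S$. Uniqueness of the basis pursuit solution (Assumption~\ref{assum:unique}) is characterized in \citet{zhang_necessary_2015} by $s^\top z_S+\|z_C\|_1>0$ for all nonzero $z\in\ker X$. This makes the cone function $\phi(z)=\|z_C\|_1+s^\top z_S$ a positive, $1$-homogeneous, convex gauge on $\ker X$. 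Its unit ball therefore has finite volume $V_0$.

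\textbf{Step 2: Laplace-type asymptotics.} As $\tau\to0^+$ the integral concentrates in an $O(\tau)$ neighborhood of $0$, where the linearization is exact. The contribution from outside a fixed ball is exponentially small by convexity and positivity of $\phi$. Changing variables $z=\tau u$ and using the layer-cake formula for a $1$-homogeneous gauge gives
\[
\int_{\ker X} e^{-\phi(z)/\tau}\dd z = \tau^{d-n}\,(d-n)!\,V_0\,(1+o(1)).
\]
Combining this with $c_{1,\tau}^{-1}=2^d\tau^d$, we obtain
\[
\mathcal F_{n,\tau}\approx \tfrac{\|\theta^\ast\|_1}{\tau} + n\log\tau + \tfrac12\log\det(XX^\top) - \log V_0 + d\log 2 - \log (d-n)! .
\]
Minimizing over $\tau$ gives $\tau=\|\theta^\ast\|_1/n$, and the value $n\log\|\theta^\ast\|_1 + \dots$ follows. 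Multiplying by $2/n$ produces $2\log\|\theta^\ast\|_1+\frac1n\log\det(XX^\top)-\frac2n\log V_0+K_2(d,n)$, where $K_2$ collects $2 - 2\log n + \frac{2}{n}(d\log2-\log(d-n)!)$. I would check the exact constant against (\ref{eq:K2}).

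\textbf{Step 3: the case $|S|=n$.} Here $X_S$ is invertible, so $\ker X=\{(z_S,z_C): z_S=-\Psi z_C\}$. We parametrize by $z_C\in\mathbb R^{d-n}$, and the Jacobian of this embedding is $\sqrt{\det(I+\Psi^\top\Psi)}$. Then $\phi=\|z_C\|_1-\psi^\top z_C$, which is a sum over coordinates $k$ of $|z_k|-\psi_k z_k$. Uniqueness forces $|\psi_k|<1$. The set $\{\sum_k(|z_k|-\psi_k z_k)\le1\}$ is a union of $2^{d-n}$ orthant simplices. In the orthant with signs $\sigma$, the simplex has volume $\frac{1}{(d-n)!}\prod_k(1-\sigma_k\psi_k)^{-1}$. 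Summing over $\sigma$ factorizes as
\[
\frac{1}{(d-n)!}\prod_k\left[(1-\psi_k)^{-1}+(1+\psi_k)^{-1}\right]=\frac{2^{d-n}}{(d-n)!}\prod_k\frac{1}{1-\psi_k^2},
\]
which gives the stated $V_0$.

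\textbf{Main obstacle.} The hardest part is making Step 2 rigorous: justifying that the tail away from the linear region is negligible. I would handle it by convexity. Since $\|\theta^\ast+z\|_1-\|\theta^\ast\|_1\ge \min(\phi(z),\,c\|z\|)$, with $c>0$ coming from the strict positivity of $\phi$ on the unit sphere of $\ker X$, the tail decays like $e^{-c r/\tau}$. A related difficulty is connecting the uniqueness condition to finiteness of $V_0$ and to $|\psi_k|<1$.
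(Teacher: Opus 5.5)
Your proposal is correct and follows the paper's proof. Both reduce to an integral over $\ker X$, rescale $z=\tau u$ so the exponent tends to the directional derivative $\phi(z)=\|z_C\|_1+s^\top z_S$, use the layer-cake identity $\int e^{-\phi}=(d-n)!\,V_0$, optimize at $\tau=\|\theta^\ast\|_1/n$, and evaluate $V_0$ by the orthant--simplex decomposition; your extra additive constant $2$ in $K_2$ is discarded under the $\simeq$ convention.

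The only differences are technical. The paper uses dominated convergence and gets finiteness of $V$ from the dual certificate $\mu$ of \citet{zhang_necessary_2015} plus a compactness lemma. You instead use exact local piecewise linearity with a near/far split, and the equivalent null-space condition $\phi>0$ on $\ker X\setminus\{0\}$, which also gives $|\psi_k|<1$ directly. Note that convexity alone gives the global bound $\|\theta^\ast+z\|_1-\|\theta^\ast\|_1\ge\phi(z)\ge c\|z\|$, so your $\min$ is not needed.
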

In the case where $n = 1$, it is always true that $|S| = n = 1$, and the formula for $V_0$ reduces to a convenient form. 
\begin{corollary}
    \label{thm:iic_p1_n1}
    If $p=1$ and $n=1$, $I=\argmax{j}|x_j|$,
    {\small\begin{align*}
    \mathrm{IIC} = \reg{\vphantom{\prod_{j}^d}2\log\|\theta^\ast\|_1} -\vol{2\log \left(\frac{1}{2\|x\|_\infty}\prod_{j=1, j\neq I}^d \frac{x_I^2}{x_I^2-x_j^2}\right)}
    \end{align*}}
\end{corollary}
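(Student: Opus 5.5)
The plan is to specialize Theorem~\ref{thm:iic_peq1} to $n=1$, where $X = x^\top$ is a single row and $Y=y$ is a scalar. Since the theorem already gives both the general IIC formula and the closed form of $V_0$ when $|S|=n$, the remaining work is computational. It has three parts: identify $\theta^\ast$ and its support, evaluate $\Psi$, $\psi$ and $\det(I+\Psi^\top\Psi)$ explicitly, and then check that the norm and constant terms cancel as claimed.

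\emph{Step 1 (the interpolator).} The first step is to show that the minimum-$\ell^1$ solution of $x^\top\theta = y$ is $\theta^\ast = (y/x_I)\,e_I$ with $I=\arg\max_j |x_j|$. This follows from $|y| = |x^\top\theta| \le \|x\|_\infty\|\theta\|_1$, with equality attained by putting all mass on the coordinate $I$. Uniqueness in Assumption~\ref{assum:unique} forces the maximum of $|x_j|$ to be strict, and this is consistent with the factors $x_I^2 - x_j^2 > 0$ in the statement. Hence $S=\{I\}$ and $|S| = 1 = n$, so the second part of Theorem~\ref{thm:iic_peq1} applies, and $\|\theta^\ast\|_1 = |y|/\|x\|_\infty$.

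\emph{Step 2 (evaluating $V_0$).} Here $X_S = x_I$ is a scalar and $\Psi = x_C^\top / x_I \in \mathbb R^{1\times(d-1)}$. Writing $\sigma=\mathrm{sign}(\theta^\ast)$, we get $\psi = \sigma\, x_C / x_I$, so $\psi_k^2 = x_j^2/x_I^2$ as $j$ ranges over $C$, and therefore $1/(1-\psi_k^2) = x_I^2/(x_I^2-x_j^2)$. Since $\Psi^\top\Psi$ is rank one, the matrix determinant lemma gives
\[
\det(I+\Psi^\top\Psi) = 1 + \|x_C\|_2^2/x_I^2 = \|x\|_2^2/x_I^2.
\]
Substituting into the theorem yields
\[
V_0 = \frac{2^{d-1}}{(d-1)!}\cdot\frac{\|x\|_2}{\|x\|_\infty}\prod_{j\neq I}\frac{x_I^2}{x_I^2-x_j^2}.
\]

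\emph{Step 3 (assembly).} With $n=1$ we have $\frac1n\log\det(XX^\top) = 2\log\|x\|_2$. This cancels exactly against the $-2\log\|x\|_2$ contributed by $-2\log V_0$. What remains is $2\log\|x\|_\infty - 2\log\prod_{j\neq I}\frac{x_I^2}{x_I^2-x_j^2}$, together with the dimension-only terms $-2(d-1)\log 2 + 2\log (d-1)! + K_2(d,1)$. The main obstacle is this last bookkeeping step. Using the explicit form of $K_2$ in (\ref{eq:K2}), I expect $K_2(d,1)$ to cancel the $2^{d-1}$ and $(d-1)!$ factors and leave exactly $2\log 2$; I have not verified this against (\ref{eq:K2}). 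That residual $2\log 2$ combines with $2\log\|x\|_\infty$ to give the $-2\log\bigl(\tfrac{1}{2\|x\|_\infty}\cdots\bigr)$ form in the statement. If the constants do not match exactly, any discrepancy that depends only on $d$ is at least harmless for comparing models of the same size, and in that case I would recheck the normalization $c_{1,\tau}$ and the $\tau$-optimization in the appendix to locate it.
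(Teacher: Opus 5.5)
Your proof is correct, and the bookkeeping you left unverified closes exactly. By (\ref{eq:K2}), $K_2(d,1) = 2d\log 2 - 2\log (d-1)!$. This cancels the $-2(d-1)\log 2 + 2\log (d-1)!$ coming from $-2\log V_0$ and leaves precisely $2\log 2$. You should drop the fallback remark: a $d$-dependent discrepancy would not have been harmless, because the IIC is used to compare models across $d$.

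You specialize Theorem~\ref{thm:iic_peq1} via $|S|=n=1$, which is the route the main text alludes to. The appendix proves the corollary differently. It writes the hyperplane integral as a Radon transform and uses the Fourier transform $\prod_j 2\tau/(1+\xi_j^2\tau^2)$ of the $\ell^1$ kernel. It then evaluates the resulting one-dimensional inverse Fourier integral by residues, which gives the exact formula $\pi^\ast(Y) = \frac{1}{2\tau}\sum_{k}\frac{1}{|x_k|}e^{-|Y|/(|x_k|\tau)}\prod_{j\neq k}\frac{x_k^2}{x_k^2-x_j^2}$ for every $\tau>0$. Finally, it keeps the dominant $k=I$ term as $\tau\to0^+$ and optimizes to get $\tau^\ast = |Y|/\|x\|_\infty = \|\theta^\ast\|_1$.

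Each route has an advantage.
\begin{itemize}
\item The paper's argument is self-contained: it does not rely on the dominated-convergence lemma or the basis-pursuit uniqueness characterization behind Theorem~\ref{thm:iic_peq1}. It also gives a non-asymptotic expression, so it serves as an independent check of the general $\ell^1$ machinery. Indeed, your $V_0$ reproduces its leading term $\frac{1}{2\tau\|x\|_\infty}e^{-\|\theta^\ast\|_1/\tau}\prod_{j\neq I}\frac{x_I^2}{x_I^2-x_j^2}$ exactly.
\item The residue computation, however, needs all $|x_j|$ pairwise distinct so that the poles are simple. Your argument needs only that the maximum $|x_I|$ is strict, which uniqueness already supplies.
\item Your computation also explains why no $\|x\|_2$ survives: $\frac1n\log\det(XX^\top)$ is cancelled by the Jacobian factor $\sqrt{\det(I+\Psi^\top\Psi)} = \|x\|_2/\|x\|_\infty$ of the parametrization of $\ker X$.
\end{itemize}
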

 
The regularization term $2\log \|\theta\|_1$ penalizes interpolators with a large $\ell^1$ norm, \emph{a la} LASSO. The sharpness term contains the same contribution $(1/n) \log \det(XX^\top)$ and an ambient term $K_2(d, n)$, but also depends on a volume $V_0$ that is unique to the sparse case. 
Intuitively, $V_0$ measures the extent of the space in  $\mathrm{ker}(X)$ that is within unit $\ell^1$ distance in the directions compatible with the support and signs of $\theta^\ast$. As we see in the $n = 1$ case in Corollary \ref{thm:iic_p1_n1}, this quantity can increase quickly in the model size. The result is that unlike the $p \geq 2$ cases, the sparsity inducing regularizer can break the regularization--sharpness tradeoff by ensuring that \textbf{both terms decrease in model size}. The result is that the $\ell^1$ linear interpolators may achieve the best empirical performance, the smallest IIC, and the largest marginal likelihood among the regularizers considered here. 

Taken together, these three cases show that, for linear interpolators with $\ell^p$ priors, the IIC always decomposes into a regularization term that depends only on the norm of the chosen interpolator and a sharpness term that depends on the data geometry and, in nonquadratic cases, on local curvature information. 


\section{Numerical Experiments}
\label{sec:Experiments}
We evaluate IIC in overparameterized linear regression under feature constructions that allow us to vary the dimension $d$. Across experiments, we consider $\ell^p$ regularization and report the IIC objective together with its components: a regularization term and a sharpness term. We mainly focus on the post-interpolation regime where training error is (near) zero, and generalization is measured by test mean squared error (MSE). Our examinations begin with a real-world ``Facial and Oral Temperature Dataset" (FLIR) from PhysioNet~\citep{wang2023physionet, Wang_Infrared_2022, physionet} as our main case study, and then extend the evaluation of correlation across additional datasets. 

\paragraph{Visualizing IIC and the Tradeoff.} 
\begin{figure}
    \centering
    \begin{subfigure}{0.24\linewidth}
    \includegraphics[width=\linewidth]{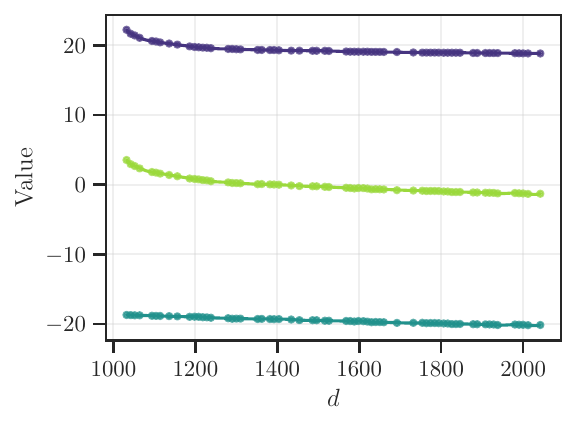}\vspace{-2mm}
    \caption{$p=1$}
    \label{fig:iic_p1}
    \end{subfigure}\hspace{0.01\linewidth}
    \begin{subfigure}{0.24\linewidth}
    \includegraphics[width=\linewidth]{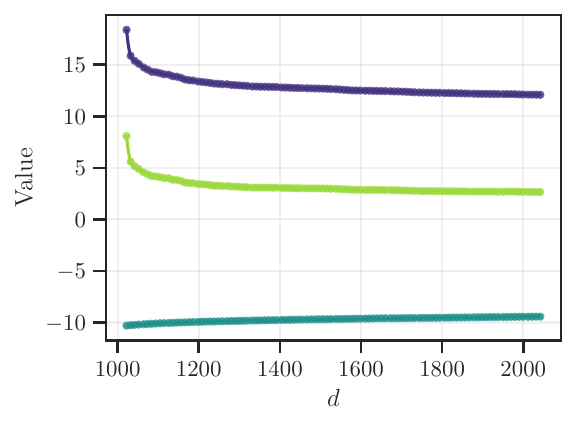}\vspace{-2mm}
    \caption{$p=2$}
    \label{fig:iic_p2}
    \end{subfigure}
    \begin{subfigure}{0.24\linewidth}
    \includegraphics[width=\linewidth]{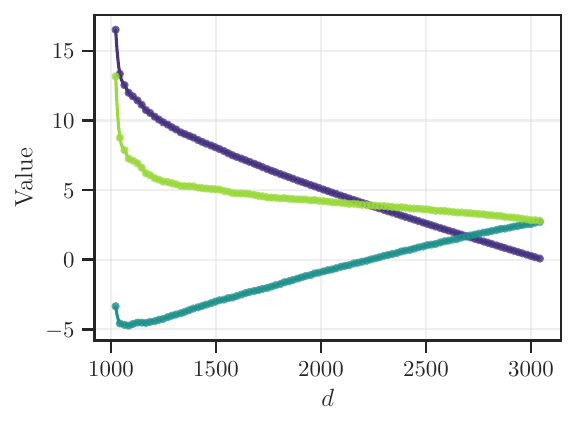}\vspace{-2mm}
    \caption{$p=3$}
    \label{fig:iic_p3}
    \end{subfigure}\hspace{0.01\linewidth}
    \begin{subfigure}{0.24\linewidth}
    \includegraphics[width=\linewidth]{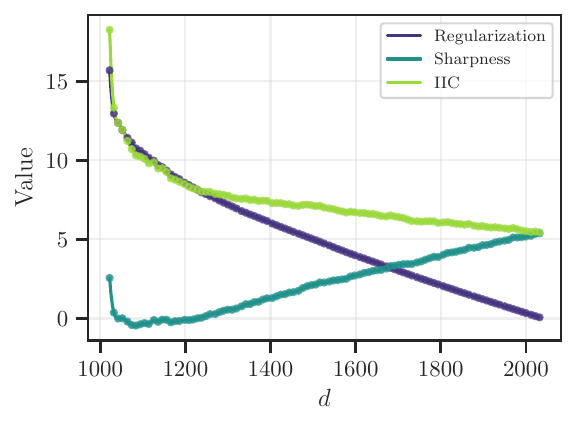}\vspace{-2mm}
    \caption{$p=4$}
    \label{fig:iic_p4}
    \end{subfigure}
    \caption{Decomposition of the Interpolating Information Criterion (green) for minimum $\ell^p$-norm interpolating solutions (with varying $p$) using random Fourier features as a tradeoff between the effect of regularization (purple) and local sharpness (blue).\vspace{-.5cm}}
    \label{fig:IIC_RFF}
\end{figure}
We consider the FLIR dataset augmented with random Fourier features (RFF) \citep{rahimi_random_2007}, constructed following the experimental setup in \citet{belkin_reconciling_2019}. For each original input $x_i\in\mathbb{R}^{d_0}$, RFF maps it into a higher-dimensional feature vector $d$. We sample $d_h=\lfloor d/2\rfloor$ vectors $v_k\sim N(0,\sigma^{-2}I_{d_0}),\ k=1,...,d_h$, and construct the mapped features as $\frac{1}{\sqrt{d_h}}\left (\cos(v_k^\top x_i),\sin(v_k^\top x_i)\right )$. Our initial investigations study how IIC behaves as the feature dimension $d$ increases beyond the interpolation threshold (Figure \ref{fig:IIC_RFF}). We focus on the post-interpolation regime, where training error is (near) zero, such that the variation across $d$ reflects how the selected interpolating solution changes rather than improvements in the training fit. As the feature dimension increases within the overparameterized regime, the overall IIC decreases steadily. In the decomposition, the regularization term decreases with $d$, indicating that on higher-dimensional interpolation manifolds, one can find lower-norm interpolators. In contrast, the sharpness term increases as $d$ grows, indicating an increasing local geometric contribution around the selected interpolator. These two terms move in opposite directions and together \textbf{form a tradeoff}.

For $p=1$, we plot the decomposition only when the support has the same cardinality as the data set (Figure \ref{fig:iic_p1}). This result is noteworthy for demonstrating the change in the trend from the other $\ell^p$ cases. Comparing with the larger $p$ in Figure \ref{fig:IIC_RFF}, we can find that the decreasing speed of regularization slows down as $p$ decreases, and the sharpness term also changes from monotone increasing to decreasing. This effect is even more pronounced in the special case $p=1,\ n=1$, where the sharpness term decreases more noticeably (Figure \ref{fig:IIC_p1_n1}). We find that by decreasing $p$ and inducing sparsity, one can trade the effect of the regularization term to significantly decrease the sharpness term. This demonstrates one possible reason why the $\ell^1$ penalty is commonly preferred over higher orders.

To check that the decrease in IIC is not an artifact of increasing dimension, we repeat the analysis on the same FLIR dataset using polynomial features as a contrasting example (Figure \ref{fig:iic_polynomial}). Prior work \citep{schaeffer_double_2023, hodgkinson_interpolating_2023} has noted that polynomial feature regression need not display a second descent in test error as model size increases. In this setting, IIC does not continue to decrease as before. Instead, it varies with dimension in a way that tracks the observed test MSE, showing a qualitatively different trend from the RFF case. This contrast provides evidence that IIC is sensitive to whether a model class exhibits benign overfitting, and that its decomposition reflects meaningful structure in the interpolating regime.

\begin{figure}
    \centering
\begin{subfigure}{0.35\linewidth}
    \includegraphics[width=1\linewidth]{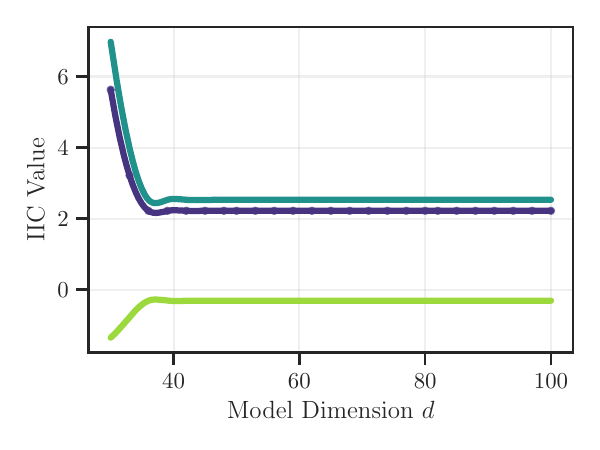}
    \caption{$p=2$}
    \label{fig:iic_polynomial_p2}
\end{subfigure}
\begin{subfigure}{0.35\linewidth}
    \includegraphics[width=1\linewidth]{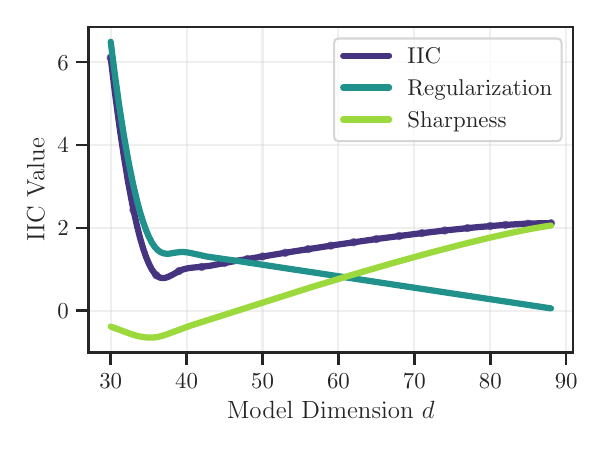}
    \caption{$p=3$}
    \label{fig:iic_polynomial_p3}
\end{subfigure}
\caption{Decomposition of the Interpolating Information Criterion (green) for minimum $\ell^p$-norm interpolating solutions (with varying $p$) using \textbf{polynomial features} as a tradeoff between the effect of regularization (purple) and local sharpness (blue). These estimators \emph{perform poorly}.\vspace{-.3cm}}
\label{fig:iic_polynomial}
\end{figure}

\paragraph{Correlations with Generalization and Model Size.} We next test whether IIC provides a reliable proxy for generalization across datasets, and the trend of the tradeoff terms when the dimension of features grows. We use eight datasets: FLIR, CIFAR-10, MNIST, Fashion-MNIST, CIFAR-100, SVHN, scikit-learn digits, and USPS digits. For each dataset, we construct RFF features and vary the feature dimension $d$ to traverse the interpolating regime. For each configuration we compute the IIC objective and evaluate generalization by test MSE. We then compute the Spearman correlation $\rho$ between IIC and generalization error, regularization term and dimension, sharpness term and dimension (Table \ref{tab:spearman_corr}). The first correlation assesses whether IIC provides the same ordering as generalization error, while the latter two diagnose the direction of the interpolating tradeoff captured by the decomposition.

Overall, IIC is strongly rank-consistent with test MSE in most settings, and the decomposition exhibits a clear pattern in $d$: the regularization term is typically negatively correlated with dimension (larger $d$ admits smaller-norm interpolators), whereas the sharpness term is typically positively correlated with dimension (local geometric cost increases with $d$). Together, these correlations support the view that IIC tracks generalization in the interpolating regime through an explicit regularization–sharpness tradeoff.

\begin{table}[t]
\centering
\caption{Spearman Correlations (with 95\% confidence intervals) for IIC vs. test mean-squared error; the regularization term with respect to model size; and the sharpness term with respect to model size.\vspace{.3cm}}
\label{tab:spearman_corr}
\begin{adjustbox}{width=0.65\textwidth,center}
\begin{tabular}{@{} l c c c c @{}}
\toprule
\textbf{Dataset} & \textbf{Reg} & \textbf{IIC vs. test MSE} & \textbf{reg vs. dim} & \textbf{sharpness vs. dim} \\
\midrule
\textbf{FLIR} & $p=1$ & $0.847\pm0.030$ & $-0.999\pm0.001$ & $-0.998\pm0.001$ \\
 & $p=2$ & $0.799\pm0.213$ & $-1.000\pm 0.000$ & $1.000\pm0.000$ \\
 & $p=3$ & $0.785\pm 0.214$  & $-1.000\pm 0.000$ & $0.973\pm0.012$ \\
\midrule
\textbf{CIFAR-10} & $p=1$ & $0.997\pm0.001$ & $-0.984\pm0.008$ & $-0.999\pm0.001$ \\
 & $p=2$ & $1.000\pm0.000$ & $-1.000\pm0.000$ & $1.000\pm0.000$ \\
 & $p=3$ & $1.000\pm0.000$ & $-1.000\pm0.000$ & $0.929\pm0.015$ \\
\midrule
\textbf{MNIST} & $p=1$ & $0.999\pm0.002$ & $-0.992\pm0.006$ & $-0.999\pm0.001$ \\
 & $p=2$ & $1.000\pm0.000$ & $-1.000\pm0.000$ & $1.000\pm0.000$ \\
 & $p=3$ & $1.000\pm0.000$ & $-1.000\pm0.000$ & $0.894\pm0.057$ \\
\midrule
\textbf{FMNIST} & $p=1$ & $0.997\pm0.002$ & $-0.992\pm0.002$ & $-0.998\pm0.002$ \\
 & $p=2$ & $1.000\pm0.000$ & $-1.000\pm0.000$ & $1.000\pm0.000$ \\
 & $p=3$ & $1.000\pm0.000$ & $-1.000\pm0.000$ & $0.896\pm 0.036$ \\
\midrule
\textbf{CIFAR-100} & $p=1$ & $0.998\pm0.003$ & $-0.967\pm0.034$ & $-1.000\pm0.000$ \\
 & $p=2$ & $1.000\pm0.000$ & $-1.000\pm0.000$ & $1.000\pm0.000$ \\
 & $p=3$ & $1.000\pm0.000$ & $-1.000\pm0.000$ & $0.947\pm0.020$ \\
\midrule
\textbf{SVHN} & $p=1$ & $0.996\pm0.004$ & $-0.979\pm0.025$ & $-0.999\pm0.001$ \\
 & $p=2$ & $1.000\pm0.000$ & $-1.000\pm0.000$ & $1.000\pm0.000$ \\
 & $p=3$ & $1.000\pm0.000$ & $-1.000\pm0.000$ & $0.938\pm0.040$ \\
\midrule
\textbf{Scikit-learn digits} & $p=1$ & $0.936\pm0.079$ & $-0.994\pm0.010$ & $-0.998\pm0.001$ \\
 & $p=2$ & $0.984\pm0.011$ & $-1.000\pm0.000$ & $1.000\pm0.000$ \\
 & $p=3$ & $0.987\pm0.010$ & $-1.000\pm0.000$ & $0.925\pm0.021$ \\
\midrule
\textbf{USPS digits} & $p=1$ & $0.998\pm0.002$ & $-0.992\pm0.015$ & $-0.998\pm0.001$ \\
 & $p=2$ & $1.000\pm0.002$ & $-1.000\pm0.000$ & $1.000\pm0.000$ \\
 & $p=3$ & $1.000\pm0.000$ & $-1.000\pm0.000$ & $0.929\pm0.013$ \\
\bottomrule
\end{tabular}
\end{adjustbox}
\end{table}

\section{Conclusion}

This work studies model selection in overparameterized settings, where exact interpolation of the training data is both feasible and common. The interpolating regime is of particular interest in deep learning due to large neural network models that exhibit very small training losses. Although we work with linear models, our findings shed insights on how to quantify model performance when the training error is no longer a useful measurement. 

As summarized in Table~\ref{tab:tradeoffs-summary}, different statistical criteria for model selection are underpinned by distinct theoretical tradeoffs. Classical approaches, including MSE and BIC, are only effective in underparameterized regimes and become ineffective when models are large enough to perfectly interpolate the training data. In contrast, the interpolating information criterion introduces a regularization-sharpness tradeoff tailored for this modern regime. The first term in each IIC expression consistently acts as a regularization term, related to the norm of the estimated parameters and reflecting the degree to which the final solution aligns with the prior induced by the regularizer. The second component of the IIC is the sharpness term, which captures the geometric properties of the solution space and has a close link to the Fisher information. 

Another significant result in this work is the formalization of explicit expressions for the IIC in linear interpolators. We first provide a general criterion for models using an $\ell^p$ penalty when $p\ge 2$. Crucially, we then extend this analysis to the LASSO interpolator, which uses an $\ell^1$ regularizer to induce sparsity. This extension reveals a qualitative difference between smooth and sparse regularization. Under smooth regularization, the regularization term decreases while the sharpness term increases with model size, whereas under sparse regularization both terms decrease. As a result, the IIC under sparse regularization drops more clearly as the model size grows, indicating better model performance. The sparse regularization result is particularly informative, as it sets a direct link between the regularization-sharpness principle and the widely used practice of sparse modeling.

\paragraph{Limitations.}
Our work provides initial support for the existence of a novel tradeoff, but is far from conclusive. The IIC framework can hold for more general models, and while an extension of our findings (with the $\ell^1$ regularizer in particular) to that level of generality would be very challenging, it would be substantially more relevant to the deep learning context. Furthermore, for $p\ge 2$, there is an upper bound for the IIC to ensure the existence of an optimal temperature $\tau$. Consequently, the IIC framework is not effective for studying interpolators with large $d$ when $p > 2$; a new strategy is required here.


%


\bibliography{reference}
\bibliographystyle{icml2025}

\newpage
\appendix
\onecolumn

\section{Laplace Approximation}\label{sec:Laplace_approximation}
The Laplace approximation provides a simple asymptotic estimate for integrals whose main contribution comes from a small neighborhood of an optimizer. It does this by using a quadratic expansion of the exponent at that point, turning the integral into a Gaussian-type expression. We apply it here as a technical step to obtain an explicit approximation that makes the regularization and local curvature effects transparent in the subsequent derivations. The following is derived from \citet[Theorem 15.2.2]{simon_advanced_2015}.
\begin{theorem}[Laplace Approximation]
\label{thm:Laplace}
Let $g,h\colon \mathbb R^n\to \mathbb R$ satisfy the following conditions: 
\begin{itemize}
\item $h(x)$ has a unique global minimum at $x^* \in \mathbb R^n$;
\item $h(x)$ is $C^2$ differentiable near $x^*$, and its Hessian matrix at this point $\nabla^2 h(x^*)$ is strictly positive definite and symmetric;
\item for some $r>0$ and $h_r$, for all $\|x-x^*\|>r$, $h(x)\geq h_r>h(x^*)$; and
\item $g(x)$ is continuous, integrable and strictly positive at $x^*$.
\end{itemize}
Then as $\gamma \to 0^+$, the following asymptotic relation holds:
\[
S_n := \int_{\mathbb R^n}g(x)e^{-\frac{1}{\gamma}h(x)}\dd x\sim (2\pi\gamma)^{\frac{n}{2}}g(x^*)e^{-\frac{1}{\gamma}h(x^*)}\det (\nabla^2 h(x^*))^{-\frac{1}{2}}.
\]
\end{theorem}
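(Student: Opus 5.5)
The plan is the standard multivariate Laplace argument: localize to a small ball around $x^*$, Taylor expand to second order inside it, and show the exterior is exponentially negligible. Multiplying both sides by $e^{h(x^*)/\gamma}$, it suffices to prove
\[
\int_{\mathbb R^n} g(x)e^{-\frac{1}{\gamma}(h(x)-h(x^*))}\dd x \sim (2\pi\gamma)^{n/2} g(x^*)\det(\nabla^2 h(x^*))^{-1/2}.
\]
Write $H=\nabla^2 h(x^*)$ and fix $\varepsilon>0$. Since $x^*$ is an interior minimum where $h$ is $C^2$, we have $\nabla h(x^*)=0$. By continuity of $\nabla^2 h$ and of $g$ at $x^*$, I choose $\delta\in(0,r]$ such that on $B_\delta(x^*)$:
\begin{itemize}
\item $\tfrac12(x-x^*)^\top (H-\varepsilon I)(x-x^*)\le h(x)-h(x^*)\le \tfrac12(x-x^*)^\top(H+\varepsilon I)(x-x^*)$, with $\varepsilon$ smaller than the least eigenvalue of $H$;
\item $|g(x)-g(x^*)|\le\varepsilon$.
\end{itemize}
The first bounds follow from Taylor's theorem with the mean-value form of the remainder.

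For the inner integral over $B_\delta$, substitute $x=x^*+\sqrt{\gamma}\,u$. This gives $\gamma^{n/2}\int_{\|u\|<\delta/\sqrt\gamma} g(x^*+\sqrt\gamma u)e^{-\frac{1}{\gamma}(h-h(x^*))}\dd u$. The integrand is sandwiched between $(g(x^*)\mp\varepsilon)e^{-\frac12 u^\top(H\pm\varepsilon I)u}$. As $\gamma\to0^+$ the domain expands to $\mathbb R^n$, so the Gaussian integrals converge to $(2\pi)^{n/2}\det(H\pm\varepsilon I)^{-1/2}$. The upper bound holds by monotone convergence, and for the lower bound the truncated Gaussian mass tends to the full mass. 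Dividing by $(2\pi\gamma)^{n/2}$ gives $\liminf$ and $\limsup$ bounds that are continuous in $\varepsilon$. Letting $\varepsilon\to0$ yields $g(x^*)\det H^{-1/2}$, where strict positivity of $g(x^*)$ makes the ratio statement $\sim$ meaningful.

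The outer region is the step I expect to need the most care, since we must show it is $o(\gamma^{n/2})$. On the annulus $\delta\le\|x-x^*\|\le r$, the lower quadratic bound can be extended if $\delta$ was chosen with $C^2$-ness valid on $B_r$. More robustly, $h$ is continuous and exceeds $h(x^*)$ there by uniqueness of the minimum, so by compactness $h-h(x^*)\ge\eta_\delta>0$. For $\|x-x^*\|>r$ the hypothesis gives $h-h(x^*)\ge h_r-h(x^*)>0$. Let $m=\min(\eta_\delta,h_r-h(x^*))$. On the outer region
\[
|g|e^{-(h-h(x^*))/\gamma}\le |g|e^{-m/\gamma},
\]
and $g$ is integrable, so the outer integral is at most $\|g\|_{L^1}e^{-m/\gamma}=o(\gamma^{n/2})$. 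This requires taking absolute values and uses integrability of $g$ rather than positivity. The subtle point is the annulus: continuity of $h$ away from $x^*$ is only implicit, so if needed I would shrink to $\delta=r$ by taking $r$ within the $C^2$ neighbourhood. Combining the inner and outer estimates gives the claimed asymptotic.
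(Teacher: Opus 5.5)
The paper does not prove this statement; it only cites Simon's \emph{Advanced Complex Analysis}, Theorem 15.2.2. You use the standard localization argument. The step on $B_\delta(x^*)$ is correct: the Taylor sandwich with $H\pm\varepsilon I$, the rescaling $x=x^*+\sqrt{\gamma}\,u$, the $\liminf/\limsup$ bounds, and then $\varepsilon\to0$. The step on $\{\|x-x^*\|>r\}$, via the bound $\|g\|_{L^1}e^{-m/\gamma}$, is also correct.

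The genuine gap is the annulus $A_\delta=\{\delta\le\|x-x^*\|\le r\}$. There you need $\inf_{A_\delta}(h-h(x^*))>0$. You obtain it by asserting that $h$ is continuous and using compactness, but continuity of $h$ outside the (possibly tiny) $C^2$ neighbourhood is not a hypothesis.

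Your fallback does not work either. The radius $r$ is given, and the separation hypothesis passes to larger radii, not smaller ones, so you cannot move $r$ inside the $C^2$ neighbourhood. Also, $\delta=\delta(\varepsilon)$ must shrink as $\varepsilon\to0$, so the annulus never disappears.

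In fact no argument can close this step, because the statement as written is false. Take $n=1$, $x^*=0$, $g(x)=e^{-x^2}$, and set $h(x)=x^2$ for $|x|\le\tfrac12$, $h(x)=(|x|-\tfrac12)^4$ for $\tfrac12<|x|\le1$, and $h(x)=\tfrac1{16}$ for $|x|>1$. Every hypothesis holds, with $r=1$ and $h_r=\tfrac1{16}$. Yet $\int_{1/2}^{1}g\,e^{-h/\gamma}\,\dd x\ge e^{-1}\int_0^{1/2}e^{-t^4/\gamma}\,\dd t\sim e^{-1}\Gamma(\tfrac54)\,\gamma^{1/4}$, while the claimed right-hand side is $\sqrt{\pi\gamma}$. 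So the ratio blows up as $\gamma\to0^+$.

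The right repair is to add a hypothesis. Either require $h$ continuous (lower semicontinuous suffices), or assume directly that $\inf_{\|x-x^*\|\ge\delta}h(x)>h(x^*)$ for every $\delta>0$. With that, your proof is complete. In the paper's applications, $h(w)=\|X^+Y+Qw\|_p^p$ is continuous, so nothing downstream is affected.
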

Here, $\sim$ signifies asymptotic equivalence, indicating that the ratio of the two functions on either side tends to one (as $\gamma \to 0^+$).

\section{Bayesian Duality} \label{sec:Bayes_duality}
Direct Laplace expansions in parameter space fail when $d>n$ because the local Hessian of the log likelihood is singular and the quadratic approximation is invalid. To avoid this singularity, we adopt a strategy based on a \emph{Bayesian duality} principle, following ~\citet{hodgkinson_interpolating_2023}. This principle establishes an equivalence between the marginal likelihood of the original overparameterized model and that of a dual, underparameterized model. 
\begin{lemma}[Proposition 1 of~\citet{hodgkinson_interpolating_2023}]
    \label{lemma:bayesian_duality}
    For a likelihood $p_{n,\gamma}(y\mid x,\theta)$ and prior $\pi(\theta)$, there is a dual likelihood $p^\ast _{n,\gamma}(y\mid z) = c_{n,\gamma}(z)e^{-\frac{1}{\gamma}L(z,y)}$ and a dual prior $\pi^\ast(z) = \int_{F^{-1}(z)}\frac{\pi(\theta)}{\det J(\theta)^{1/2}}\dd \mathcal{H}^{d-mn}(\theta)$ such that
    \begin{equation}
    \int_{\mathbb R^d}p_{n,\gamma}(y|x,\theta)\pi(\theta)\dd \theta = \int_{\mathbb R^{n}}p^*_{n,\gamma}(y|z)\pi^*(z)\dd z.
\end{equation}
\end{lemma}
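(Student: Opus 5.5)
The plan is to read the duality as a change of variables from parameter space $\mathbb R^d$ to output space, using the coarea formula. Let $F\colon\mathbb R^d\to\mathbb R^{mn}$ be the map $\theta\mapsto (f(x_i,\theta))_{i=1}^n$, where $m$ is the output dimension ($m=1$ in our setting, so $mn=n$). Let $J(\theta)=DF(\theta)DF(\theta)^\top$, so that $\det J(\theta)^{1/2}$ is the coarea Jacobian of $F$.

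The key observation is that the likelihood depends on $\theta$ only through $z=F(\theta)$. Indeed, $p_{n,\gamma}(y\mid x,\theta)=c_{n,\gamma}e^{-\frac1\gamma L(F(\theta),y)}$, and we define $p^\ast_{n,\gamma}(y\mid z)=c_{n,\gamma}e^{-\frac1\gamma L(z,y)}$. For the linear model, $L(z,y)=\sum_i(z_i-y_i)^2$.

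I will carry out the following steps in order.

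\textbf{Step 1: regularity.} I assume $F$ is locally Lipschitz; it is $C^1$ in all cases of interest and linear, $F(\theta)=X\theta$, here. I also assume the critical set $\{\theta:\operatorname{rank}DF(\theta)<mn\}$ has Lebesgue measure zero. For the linear model this is automatic under Assumption \ref{assum:unique}, since $J=XX^\top$ is a constant positive definite matrix.

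\textbf{Step 2: coarea formula.} For nonnegative measurable $g$, Federer's coarea formula gives
\[
\int_{\mathbb R^d} g(\theta)\det J(\theta)^{1/2}\,\dd\theta=\int_{\mathbb R^{mn}}\int_{F^{-1}(z)} g(\theta)\,\dd\mathcal H^{d-mn}(\theta)\,\dd z .
\]
I apply it with $g=p_{n,\gamma}(y\mid x,\cdot)\,\pi(\cdot)\,\det J(\cdot)^{-1/2}$, set to zero on the null critical set. This choice is nonnegative, so Tonelli applies and no integrability issues arise. The left-hand side is then exactly the marginal likelihood.

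\textbf{Step 3: factor out the likelihood.} On each fiber $F^{-1}(z)$ the likelihood equals the constant $p^\ast_{n,\gamma}(y\mid z)$, so it comes out of the inner integral. What remains inside is precisely $\pi^\ast(z)$, which proves the stated identity.

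\textbf{Step 4: the $\gamma\to0^+$ limit.} This gives the consequence recorded in Lemma \ref{lemma:duality_asymptotic}. For squared loss, $p^\ast_{n,\gamma}(y\mid\cdot)$ is exactly the $\mathcal N(y,\tfrac\gamma2 I)$ density, so $Z_n=\mathbb E[\pi^\ast(y+\sqrt{\gamma/2}\,\xi)]$ with $\xi$ standard normal. Hence $Z_n\to\pi^\ast(Y)$ provided $\pi^\ast$ is bounded and continuous at $Y$.

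In the linear case, $F^{-1}(z)=X^+z+\ker X$. Parametrizing this fiber isometrically by $w\mapsto X^+z+Qw$ gives
\[
\pi^\ast(z)=\det(XX^\top)^{-1/2}\int_{\mathbb R^{d-n}}\pi(X^+z+Qw)\,\dd w ,
\]
which is (\ref{eq:target_integral_with_Q}).

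\textbf{Main obstacle.} Continuity and boundedness of $\pi^\ast$ in Step 4 are the main point to check. For the generalized Gaussian prior I expect to obtain them by dominated convergence, with an envelope $e^{-c\|w\|_p^p}$ that is uniform for $z$ in a neighbourhood of $Y$. Such an envelope follows from $\|X^+z+Qw\|_p\ge c'\|w\|_p-C\|z\|$, using that $Q$ is injective. In the general nonlinear setting, the corresponding difficulty is the measure-zero condition on the critical set, which I will impose as a hypothesis.
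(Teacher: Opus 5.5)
Your proposal is correct and follows essentially the same route. The paper gives no independent proof: it defers to Proposition 1 of \citet{hodgkinson_interpolating_2023}, whose argument is exactly this coarea-formula disintegration over the fibres $F^{-1}(z)$, and then only specializes to $F(\theta)=X\theta$ with constant $J=XX^\top$, as in your linear-case computation. Your explicit hypotheses (the likelihood depends on $\theta$ only through $F(\theta)$, $F$ is locally Lipschitz, and the critical set is Lebesgue-null) are what that argument needs, and your Step 4 goes beyond this statement, belonging instead to Lemma~\ref{lemma:duality_asymptotic}.
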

 The left integral is the original marginal likelihood $Z_n$ and the right integral is the dual representation $Z^*_n$. In our linear regression setting we set $\gamma = 2\sigma^2$ and define $L(Y,\hat Y) = \sum_{i=1}^n\ell(y_i,\hat y_i)$ with squared error loss. The likelihood is 
\begin{align*}
p(Y|X,\theta) &= (\pi \gamma)^{-\frac{n}{2}}\exp\left (-\frac{1}{\gamma}\sum_{i=1}^n (y_i-x_i^\top \theta)^2\right)\\
&= (\pi \gamma)^{-\frac{n}{2}}\exp\left (-\frac{1}{\gamma}L(Y,X\theta)\right).
\end{align*}
The corresponding dual likelihood is
\[
p^*(Y|z) = (\pi \gamma)^{-\frac{n}{2}}\exp\left (-\frac{1}{\gamma}L(Y,z)\right).
\]
Let $F(\theta) := (f(x_i,\theta))_{i=1}^n = X\theta$. The dual prior is defined by
\[
\pi^\star(z) = \int_{F^{-1}(z)}\frac{\pi(\theta)}{\det J(\theta)^{\frac{1}{2}}}\dd \mathcal H^{d-n}(\theta)
\]
where the integral is taken over the level set $X\theta = z$ and $\mathcal{H}^\alpha$ is Hausdorff measure. Then the Jacobian of $F(\theta)$ is
\[DF(\theta) = \frac{\partial F(\theta)}{\partial\theta} = X\]
and $J(\theta) := DF(\theta)DF(\theta)^\top = XX^\top$, which is constant with respect to $\theta$.

\section{Proofs of the results in the main text}
\subsection{Proof of Theorem \ref{thm:pi_p2}: IIC with $p=2$}\label{sec:pf_IIC_p2}
We first get the approximation of the dual prior, and use Lemma \ref{lemma:duality_asymptotic} and Definition~\ref{def:IIC}  to get the result of this theorem. 
\begin{lemma}
    For linear regression with $\ell^2$ regularizer, the dual prior $\pi^*(Y)$ is approximated as $\tau \to 0^+$ by
    \[
    \pi^\star(Y) \sim (\pi\tau)^{-\frac{n}{2}}(\det XX^\top)^{-\frac{1}{2}}\exp\left(-\frac{1}{\tau}\|X^+Y\|_2^2\right).
    \]
\end{lemma}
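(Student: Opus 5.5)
The plan is to evaluate the Hausdorff integral in \eqref{eq:target_integral} exactly rather than asymptotically. For $p=2$ the integrand is Gaussian, so the claimed relation should hold as an identity for every $\tau>0$, and in particular as $\tau\to0^+$.

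\textbf{Step 1: normalizing constant.} First I will make the prior normalizing constant explicit, since \eqref{eq:target_integral_with_Q} suppresses it. With $p=2$ we have $\Gamma(\tfrac12+1)=\tfrac{\sqrt\pi}{2}$, so
\[
c_{2,\tau}^{-1}=2^d\left(\tfrac{\sqrt\pi}{2}\right)^d\tau^{d/2}=(\pi\tau)^{d/2}.
\]
This gives
\[
\pi^\ast(Y)=\frac{(\pi\tau)^{-d/2}}{\sqrt{\det(XX^\top)}}\int_{\mathbb R^{d-n}}e^{-\frac1\tau\|X^+Y+Qw\|_2^2}\,\dd w .
\]
Using $Q$ with orthonormal columns to parametrize $\mathcal M=\{X^+Y+Qw\}$ is legitimate. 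Under Assumption \ref{assum:unique}, $X^+Y$ is an interpolator, and a semi-orthogonal map is an isometry onto $\ker X$, so $\dd\mathcal H^{d-n}$ on $\mathcal M$ pulls back to Lebesgue measure $\dd w$.

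\textbf{Step 2: orthogonal splitting.} The key structural fact is that $X^+Y=X^\top(XX^\top)^{-1}Y$ lies in $\mathrm{Range}(X^\top)=(\ker X)^\perp$, while $Qw\in\ker X$. Pythagoras then gives
\[
\|X^+Y+Qw\|_2^2=\|X^+Y\|_2^2+\|Qw\|_2^2=\|X^+Y\|_2^2+\|w\|_2^2,
\]
where the last equality uses $Q^\top Q=I_{d-n}$. This also identifies $X^+Y$ as the minimum-$\ell^2$-norm interpolator $\theta^\ast$, which links the result to Theorem \ref{thm:pi_p2}.

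\textbf{Step 3: Gaussian integral.} The integral now factorizes as
\[
e^{-\frac1\tau\|X^+Y\|_2^2}\int_{\mathbb R^{d-n}}e^{-\|w\|_2^2/\tau}\,\dd w=e^{-\frac1\tau\|X^+Y\|_2^2}(\pi\tau)^{(d-n)/2}.
\]
Multiplying by $(\pi\tau)^{-d/2}$ leaves $(\pi\tau)^{-n/2}$. This yields the stated expression exactly, and hence asymptotically as $\tau\to0^+$. One could alternatively invoke Theorem \ref{thm:Laplace} with $h(w)=\|X^+Y+Qw\|_2^2$, whose Hessian is $2I_{d-n}$ and whose minimizer is $w^\ast=0$. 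This returns the same constant, but the direct computation is cleaner.

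\textbf{Main obstacle.} There is no genuine analytic difficulty here. The only point needing care is bookkeeping: confirming that the constant $c_{p,\tau}$ and the Jacobian factor $\det(XX^\top)^{-1/2}$ from Lemma \ref{lemma:bayesian_duality} are each counted exactly once. I will check this against the general form $\pi^\ast(z)=\int_{F^{-1}(z)}\pi(\theta)\det J^{-1/2}\,\dd\mathcal H^{d-n}$ with $J=XX^\top$.
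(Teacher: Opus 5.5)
Your proof is correct, but it takes a different route from the paper. The paper applies the Laplace approximation (Theorem \ref{thm:Laplace}) with $h(w)=\|X^+Y+Qw\|_2^2$ and $g\equiv 1$. It finds the critical point $w_0=-Q^\top X^+Y$, which is in fact $0$, as you observe. It then computes the Hessian $2I_{d-n}$ with determinant $2^{d-n}$ and assembles $(\pi\tau)^{-d/2}(2\pi\tau)^{(d-n)/2}2^{-(d-n)/2}=(\pi\tau)^{-n/2}$. You instead use the orthogonal decomposition $\mathbb{R}^d=\mathrm{Range}(X^\top)\oplus\ker X$ to split the exponent exactly, and then evaluate a Gaussian integral. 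The two computations agree term by term, because Laplace's method is exact for a quadratic exponent with constant prefactor; the paper simply phrases the result asymptotically. Your version gives more: the displayed relation is an identity for every $\tau>0$, not merely an equivalence as $\tau\to0^+$. It also avoids checking the hypotheses of Theorem \ref{thm:Laplace}, and it makes transparent that $X^+Y$ is itself the minimum-norm interpolator. The paper's route gives uniformity: the same Laplace template, with Hessian $p(p-1)Q^\top\Lambda_pQ$, carries over to Theorem \ref{thm:pi_p_ge_2}. Your Pythagorean splitting is special to the quadratic case and has no analogue for $p\neq 2$.
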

\begin{proof}
Using Laplace's method from Theorem \ref{thm:Laplace}, we treat $\tau$ as an approximation parameter and define $h(w) = \|X^+Y + Qw\|_2^2,\ g(w) = 1$. To find local minima of $h(w)$, we compute its gradient
\[
\nabla h(w) = 2Q^\top(X^+Y + Qw) = 2Q^\top X^+Y + 2w =0.
\]
Setting the gradient to zero yields the minimum $w_0 = -Q^\top X^+Y$. The Hessian matrix of $h(w)$ is
\[
\nabla^2 h(w) = 2I_{d-n}.
\]
This matrix is positive definite, confirming that $w_0$ is indeed a global minimum. At this point, the function $h(w)$ evaluates to
\[
h(w_0) = \|X^+Y - QQ^\top X^+Y\|_2^2 = \|X^+Y\|_2^2.
\]
The last equality holds because $QQ^\top X^+Y = Q(XQ)^\top(XX^\top)^{-1}Y=0$. The determinant of the Hessian at $w_0$ is
\[
\det \nabla^2 h(w_0) = 2^{d-n}.
\]
Thus the integral is approximated as
\begin{align*}
    \pi^\star(Y) &\sim (\pi\tau)^{-\frac{d}{2}}(\det XX^\top)^{-\frac{1}{2}}(2\pi\tau)^{\frac{d-n}{2}}2^{-\frac{d-n}{2}}\exp\left(-\frac{1}{\tau}\|X^+Y\|_2^2\right)\\
    &\sim (\pi\tau)^{-\frac{n}{2}}(\det XX^\top)^{-\frac{1}{2}}\exp\left(-\frac{1}{\tau}\|X^+Y\|_2^2\right) \quad \text{as } \tau \to 0^+,
\end{align*}
and the result follows.
\end{proof}

Building on the framework established in Definition \ref{def:IIC}, we begin by determining the Bayes free energy, which is defined as the negative logarithm of the marginal likelihood. By Lemma \ref{lemma:duality_asymptotic} and Theorem \ref{thm:pi_p2}, when $\gamma\to 0^+$ and then $\tau\to 0^+$, 
\[
\bar{\mathcal{F}}_{n,\tau} = \frac{n}{2}\log (\pi \tau)+\frac{1}{2}\log\det(XX^\top)+\frac{1}{\tau}\|X^+Y\|_2^2.
\]
The IIC is then defined as the scaled infimum of this free energy with respect to
\[\text{IIC} = \frac{2}{n}\inf_\tau\bar{\mathcal F}_{n,\tau}.\]
To find the value of $\tau$ that minimizes $\bar{\mathcal{F}}_{n,\tau}$, we compute the derivative with respect to $\tau$ and set it to 0
\[
\frac{\partial \bar{\mathcal{F}}_{n,\tau}}{\partial \tau} = \frac{n}{2\tau} - \frac{1}{\tau^2}\|X^+Y\|_2^2 = 0.
\]
Solving for $\tau$ gives the optimal value $\tau^* = \frac{2}{n}\|X^+Y\|^2_2$. Substituting the optimizer $\tau^*$ back into Definition \ref{def:IIC} and discarding any additive term that is constant with respect to $n$ and $d$ yields 
\begin{equation}
\label{eq:IIC_p_2}
\text{IIC} = \underset{\text{regularization}}{\underbrace{\log \|X^+Y\|^2_2}} +\underset{\text{sharpness}}{\underbrace{\frac{1}{n}\log\det(XX^\top) -\log n}}.
\end{equation}
This expression makes the regularization term and the sharpness term explicit.

\subsection{Proof of Theorem \ref{thm:pi_p_ge_2}: IIC with $p\ge2$}\label{sec:pf_IIC_pge2}
Following the same proof strategy as the previous section, we first get the approximation of the dual before getting the result of this theorem.
\begin{lemma}
For the general case when $p\ge2$, the dual prior $\pi^*(Y)$ is approximated by
\begin{align*}
\pi^*(Y)\sim & \frac{(2\pi\tau)^{\frac{d-n}{2}}\tau^{-\frac{d}{p}}}{(p(p-1))^{\frac{d-n}{2}}2^{d}\Gamma\left(\frac{1}{p}+1\right)^{d}}(\det XX^\top)^{-\frac{1}{2}}e^{-\frac{1}{\tau}\|\theta^\ast\|_p^p}\prod_{j=1}^d|\theta_j^*|^{-\frac{p-2}{2}}
\end{align*}
as $\tau \to 0^+$.
\end{lemma}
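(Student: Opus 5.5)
We follow the $p=2$ argument. Starting from (\ref{eq:target_integral_with_Q}), write $\pi^*(Y) = c_{p,\tau}(\det XX^\top)^{-1/2}\int_{\mathbb R^{d-n}} e^{-\frac1\tau h(w)}\dd w$ with $h(w)=\|X^+Y+Qw\|_p^p$ and $g\equiv 1$, and apply the Laplace approximation (Theorem~\ref{thm:Laplace}) in the small parameter $\tau$. The map $w\mapsto X^+Y+Qw$ is an affine isometric parametrization of $\mathcal M$, so $h$ is strictly convex on $\mathbb R^{d-n}$ for $p>1$. By Assumption~\ref{assum:unique} it has a unique global minimizer $w_0$ with $X^+Y+Qw_0=\theta^\ast$, and $h(w_0)=\|\theta^\ast\|_p^p$. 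Coercivity of $h$ (because $Q$ is injective) gives the separation condition $h\ge h_r>h(w_0)$ outside a ball. For $p\ge 2$, $h$ is $C^2$.

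The Hessian is the key computation. Differentiating $\sum_j|\theta_j|^p$ twice gives $\nabla^2 h(w_0)=p(p-1)\,Q^\top D Q$ with $D=\mathrm{diag}(|\theta_j^\ast|^{p-2})$. Hence $\det\nabla^2h(w_0)=(p(p-1))^{d-n}\det(Q^\top DQ)$. The stated formula needs
\[
\det(Q^\top D Q)=\prod_{j=1}^d|\theta^\ast_j|^{p-2}.
\]
I would derive this from the first-order (KKT) condition. Stationarity of $h$ means $\nabla\|\theta\|_p^p(\theta^\ast)=p\,\mathrm{sign}(\theta^\ast)|\theta^\ast|^{p-1}$ is orthogonal to $\ker X$, so it equals $X^\top\lambda$ for some $\lambda\in\mathbb R^n$. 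To get the determinant, I would use the block identity for the orthogonal matrix $[U\;Q]$, where $U$ spans $\mathrm{Range}(X^\top)$:
\[
\det(Q^\top DQ)=\det D\cdot\det(U^\top D^{-1}U).
\]
This reduces the claim to showing $\det(U^\top D^{-1}U)$ contributes only an $n$-dependent factor, or one that can be absorbed into the approximation.

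I expect this step to be the main obstacle. The identity $\det(Q^\top DQ)=\det D$ is not true in general, so I would first check whether the lemma intends a regime in which the leftover factor is negligible or absorbed into $\sim$. If that fails, I would keep $\det(U^\top D^{-1}U)$ explicitly and see how it enters the IIC. A second technical point is that when some $\theta^\ast_j=0$ and $p>2$, $D$ is singular and the Laplace nondegeneracy condition fails. I would assume all $\theta^\ast_j\neq0$, which is implicit in the factor $|\theta^\ast_j|^{-(p-2)/2}$.

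Granting the Hessian determinant, Theorem~\ref{thm:Laplace} gives
\[
\int e^{-h/\tau}\dd w\sim(2\pi\tau)^{\frac{d-n}{2}}(p(p-1))^{-\frac{d-n}{2}}\prod_j|\theta_j^\ast|^{-\frac{p-2}{2}}e^{-\|\theta^\ast\|_p^p/\tau}.
\]
Multiplying by $c_{p,\tau}=\bigl(2^d\Gamma(\tfrac1p+1)^d\tau^{d/p}\bigr)^{-1}$ and by $(\det XX^\top)^{-1/2}$ then yields exactly the displayed asymptotic for $\pi^*(Y)$ as $\tau\to0^+$.
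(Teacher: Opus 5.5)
Your route is the paper's: Laplace's method for $h(w)=\|X^+Y+Qw\|_p^p$, with Hessian $p(p-1)Q^\top\Lambda^\ast_pQ$ where $\Lambda^\ast_p=D$. Your observation that $\det(Q^\top DQ)=\det D$ fails in general is correct.

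The paper closes this step with a Weinstein--Aronszajn manipulation, $\det(\lambda I+Q^\top\Lambda^\ast_pQ)=\det(\lambda I+Q^\top Q\Lambda^\ast_p)=\det(\lambda I+\Lambda^\ast_p)$. That manipulation is invalid: $Q^\top Q$ is $(d-n)\times(d-n)$ while $\Lambda^\ast_p$ is $d\times d$. The legitimate exchange is $\det(\lambda I_{d-n}+Q^\top\Lambda^\ast_pQ)=\lambda^{-n}\det(\lambda I_d+\Lambda^\ast_pQQ^\top)$, and $QQ^\top$ is the projector onto $\ker X$, not $I_d$.

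So the step you ``grant'' cannot be supplied, because the lemma as displayed is false in general for $p>2$. A homogeneity check shows this: under $Y\mapsto cY$ we get $\theta^\ast\mapsto c\theta^\ast$, so $\det(Q^\top\Lambda^\ast_pQ)$ scales like $|c|^{(p-2)(d-n)}$, whereas $\prod_j|\theta^\ast_j|^{p-2}$ scales like $|c|^{(p-2)d}$. Concretely, take $p=4$, $d=2$, $n=1$, $X=(1,1)$, $Y=y\neq0$. Then $\theta^\ast=(y/2,y/2)$ and $Q=(1,-1)^\top/\sqrt2$. This gives $\det(Q^\top\Lambda^\ast_4Q)=y^2/4$, but $\prod_j|\theta^\ast_j|^{2}=y^4/16$. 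The ratio of $\pi^\ast(Y)$ to the displayed expression therefore tends to $|y|/2$ rather than $1$.

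Consequently your first fallback fails. Your block identity $\det(Q^\top DQ)=\det D\cdot\det(U^\top D^{-1}U)$ is correct; it is the Jacobi complementary-minor formula. The leftover factor $\det(U^\top D^{-1}U)^{-1/2}$ does not depend on $\tau$, but it does depend on $X$ and $Y$. It therefore cannot be absorbed into $\sim$, nor dropped from the IIC under $\simeq$, which only discards terms constant in $n$ and $d$.

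Your second fallback is the right one. The rest of your argument (strict convexity, coercivity for the separation condition, $C^2$ regularity, positive-definite Hessian) then proves the corrected statement completely. Since $\det(U^\top D^{-1}U)=\det(XD^{-1}X^\top)/\det(XX^\top)$, the correct asymptotic is the displayed one with $(\det XX^\top)^{-1/2}$ replaced by $\det\bigl(X(\Lambda^\ast_p)^{-1}X^\top\bigr)^{-1/2}$. This reduces to the stated formula at $p=2$. Correspondingly, $\frac1n\log\det(XX^\top)$ in the sharpness term of Theorem~\ref{thm:pi_p_ge_2} should read $\frac1n\log\det\bigl(X(\Lambda^\ast_p)^{-1}X^\top\bigr)$.

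One small correction to your side remark: nondegeneracy only requires $Q^\top\Lambda^\ast_pQ\succ0$, i.e.\ no nonzero $z\in\ker X$ supported on $\{j:\theta^\ast_j=0\}$. This can hold even when some $\theta^\ast_j=0$. In that case the form $\det(Q^\top\Lambda^\ast_pQ)^{-1/2}$ remains valid, while the product form does not.
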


\begin{proof}
When $p\ge2$, the normalizing constant for the prior is
\[
c_{p,\tau} = 2^{-d}\Gamma\left(\frac{1}{p}+1\right)^{-d}\tau^{-\frac{d}{p}}
\]
Then the integral of interest given by \ref{eq:target_integral} becomes 
\[
\pi^\star(Y) = c_{p,\tau}(\det XX^\top)^{-\frac{1}{2}}\int_{\mathbb R^{d-n}} \exp\left(-\frac{1}{\tau}\|X^+Y + Qw\|_p^p\right)\dd w.
\]
To apply the Laplace approximation from Theorem \ref{thm:Laplace}, we denote $h(w;p) = \|X^+Y+Qw\|_p^p$ and $g(w)=1$. The integral is approximated by evaluating the function's behavior around its minimum $w_0$. 

First, we compute the Hessian of $h(w)$, which will also confirm the condition of Laplace method. For the integration range, $\theta = X^+Y +Qw$ given by the construction of $Q$. Then the function $h(w;p)$ can be written as $\sum_{j=1}^d |\theta_j|^p$. The first partial derivative with respect to a component $w_u$ is
\[
\frac{\partial h(w;p)}{\partial w_u} = p\sum_{j=1}^d|\theta_j|^{p-1}\cdot \mathrm{sgn}(\theta_j)\frac{\partial \theta_j}{\partial w_u}
\]
where $u=1,...,d-n$. Denote $q_{ju}$ as the $(j,u)$-th element of Q, then the $j$-th element of $\theta$ is $\theta_j = (X^+Y)_j + \sum_{u=1}^{d-n}q_{ju}w_u$. From this 
\[
\frac{\partial h(w;p)}{\partial w_u} = p\sum_{j=1}^d|\theta_j|^{p-1}\cdot \mathrm{sgn}(\theta_j)q_{ju}.
\]
For non-zero $\theta_j$, we can apply the chain rule to obtain the second derivative with respect to $w_u$ and $w_v$
\begin{align*}
    \frac{\partial^2h(w;p)}{\partial w_u\partial w_v}
    &= p(p-1)\sum_{j=1}^d|\theta_j|^{p-2}\frac{\partial \theta_j}{\partial w_v}q_{ju}
    + p\sum_{j=1}^d|\theta_j|^{p-1}\frac{\partial \mathrm{sgn}(\theta_j)}{\partial w_v}q_{ju}\\
    &= p(p-1)\sum_{j=1}^d|\theta_j|^{p-2}q_{ju}q_{jv}.
\end{align*}
The second step follows since the partial derivative of the sign function is zero. For $\theta_j=0$, we can compute the derivative manually. 
Combining all the elements into the matrix form gives 
\[
\nabla^2 h(w;p)=p(p-1)Q^\top\Lambda_p Q.
\]
Here $\Lambda_{p} =  \mathrm{diag}(|\theta_1|^{p-2},...,|\theta_d|^{p-2})$. We define $w^*$ as the minimum point of $h(w;p)$, which is equivalent to $\theta^\ast = \arg\min_{\theta}\|\theta\|_p^p$. $\Lambda^*_p$ is the diagonal matrix corresponding to $\theta^\ast$. Then the Hessian matrix is
\[
\det \nabla^2 h(w^*;p) = p^{d-n}(p-1)^{d-n}\det(Q^\top\Lambda^*_pQ).
\]
Introducing the identity matrix under a limit, this expression can be given by
\[
\det(Q^\top\Lambda^*_pQ) =\lim_{\lambda\to 0^+}\det(\lambda I+Q^\top \Lambda^*_p Q).
\]
To simplify the determinant of a product involving non-square matrices, we apply the Weinstein–Aronszajn identity~\citep{howland_weinstein-aronszajn_1970}. It allows us to exchange $Q$ to make use of $Q^\top Q=I_{d-n}$: 
\begin{align*}
\det(Q^\top\Lambda^*_pQ) &=\lim_{\lambda\to 0^+}\det(\lambda I+Q^\top Q \Lambda^*_p) \\
&= \lim_{\lambda\to 0^+}\det(\lambda I+\Lambda^*_p) \\
&= \prod_{j=1}^d |\theta^\ast_j|^{p-2}.
\end{align*}
So the Hessian at $w^*$ is 
\[
\det \nabla^2 h(w^*;p) = p^{d-n}(p-1)^{d-n} \prod_{j=1}^d |\theta^\ast_j|^{p-2}.
\]

Substituting the results into Theorem \ref{thm:Laplace} gives the approximation of the dual prior.
\end{proof}

Taking the negative logarithm of this result, the Bayes free energy is
\begin{align*}
    \bar{\mathcal F}_{n,\tau} &= \frac{d-n}{2}\log p(p-1) +d\log2\Gamma\left(\frac{1}{p}+1\right)-\frac{d-n}{2}\log 2\pi+\frac{2d-p(d-n)}{2p}\log \tau \\
    &+\frac{1}{2}\log \det XX^\top + \frac{1}{\tau}\|\theta^\ast\|_p^p+ \frac{p-2}{2}\sum_{j=1}^d\log|\theta^\ast_j|.
\end{align*}
To find the optimal $\tau$ that minimizes this energy, we differentiate with respect to $\tau$ and equate the derivative to 0:
\[
\frac{\partial \bar{\mathcal F}_{n,\tau}}{\partial\tau} = \frac{2d-p(d-n)}{2p\tau}-\frac{1}{\tau^2}\|\theta^\ast\|_p^p = 0.
\]
Solving for $\tau$ yields the optimal value
\[
\tau^* = \frac{2p}{2d-p(d-n)}\|\theta^\ast\|_p^p.
\]
Substituting $\tau^*$ back into Definition \ref{def:IIC} gives
\begin{align*}
\text{IIC} &= \inf_\tau \frac{2}{n}\bar{\mathcal F}_{n,\tau}\\
&=\frac{d-n}{n}\log p(p-1) +\frac{2d}{n}\log2\Gamma\left(\frac{1}{p}+1\right)-\frac{d-n}{n}\log 2\pi +\frac{1}{n}\log \det XX^\top\\
    &+ \frac{2d-p(d-n)}{np}\log \frac{2pe}{2d-p(d-n)} + \frac{2d-p(d-n)}{np}\log\|\theta^\ast\|_p^p+ \frac{p-2}{n}\sum_{j=1}^d\log|\theta^\ast_j|.
\end{align*}
To simplify this term, we define the correction function as
\begin{equation}
\label{eq:K1Const}
K_1(p,d,n) = \frac{d-n}{n}\log \frac{p(p-1)}{2\pi} +\frac{2d}{n}\log2\Gamma\left(\frac{1}{p}+1\right) + \frac{2d-p(d-n)}{np}\log \frac{2pe}{2d-p(d-n)}.
\end{equation}
Then the IIC can be decomposed into two primary parts
\[
\text{IIC}=\underset{\text{regularization}}{\underbrace{\frac{2d-p(d-n)}{np}\log(\|\theta^\ast\|_p^p)}}+ \underset{\text{sharpness}}{\underbrace{\frac{1}{n}\log \det(XX^\top)+\frac{p-2}{n}\sum_{j=1}^d\log|\theta^\ast_j|+ K_1(p,d,n)}}
\]
To have a positive regularization term, the dimension of parameter $d$ should not exceed $\frac{p}{p-2}n$. This expression decomposes the criterion into two interpretable parts: a regularization term related to the solution's magnitude, a complexity and sharpness term involving the geometry of the data matrix and the curvature at the minimum.

\subsection{Proof of IIC when $p=1,n=1$}\label{sec:pf_IIC_p1_n1}

When the prior is defined by the $\ell^1$ norm, the resulting density is non-differentiable at the origin, rendering the Laplace approximation unsuitable. However, for certain cases, the marginal likelihood can be evaluated exactly using Fourier analysis. This section details such a derivation for $n=1$.

The normalizing constant of prior when $p=1$ is $c_{1,\tau} = (2\tau)^{-d}$. Recall the integral of interest by Equation \ref{eq:target_integral}, the dual prior is
\begin{equation}
\pi^*(Y) = (2\tau)^{-d}(\det XX^\top)^{-\frac{1}{2}}\int_{X\theta=Y}e^{-\frac{1}{\tau}\|\theta\|_1}\dd \theta.
\label{eq:pi_star_p1}
\end{equation}

We now specialize to the case where $n=1$ and present the main result of this section, which provides an exact expression for the marginal likelihood without approximation on $\tau$.
\begin{lemma}
\label{lemma:pi_p1_n1}
For the case when $n = 1$ with any dimension $d$, $Y,\xi\in\mathbb R$. Assuming $x_j^2\neq x^2_k$ for $j\neq k$, $j,k = 1,...,d$, the dual prior $\pi^*(Y)$ is given by
\[
\pi^*(Y) = \frac{1}{2\tau}\sum_{k=1}^d \frac{1}{|x_k|}e^{-\frac{|Y|}{|x_k|\tau}}\prod_{j=1, j\neq k}^d \frac{x_k^2}{x_k^2-x_j^2}.
\]
\end{lemma}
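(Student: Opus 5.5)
The plan is to read $\pi^*(Y)$ as the probability density of a one-dimensional random variable, and then compute that density by Fourier analysis. With $n=1$ the design matrix is a single row $x^\top\in\mathbb R^{1\times d}$, so $\det(XX^\top)=\|x\|_2^2$. By (\ref{eq:pi_star_p1}),
\[
\pi^*(Y)=\frac{1}{\|x\|_2}\int_{x^\top\theta=Y}\pi(\theta)\,\dd\mathcal H^{d-1}(\theta),
\qquad \pi(\theta)=\prod_{j=1}^d\frac{1}{2\tau}e^{-|\theta_j|/\tau}.
\]

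\textbf{Step 1 (probabilistic reinterpretation).} Let $\theta_1,\dots,\theta_d$ be i.i.d.\ Laplace with scale $\tau$, and put $S=x^\top\theta$. The map $F(\theta)=x^\top\theta$ has constant Jacobian factor $\|\nabla F\|=\|x\|_2$. The coarea formula, which is exactly the identity behind Lemma \ref{lemma:bayesian_duality} with $J=XX^\top$, then shows that the density of $S$ at $Y$ is
\[
\frac{1}{\|x\|_2}\int_{F^{-1}(Y)}\pi\,\dd\mathcal H^{d-1}=\pi^*(Y).
\]
So $\pi^*$ is the density of a sum of independent Laplace variables $x_k\theta_k$ with scales $|x_k|\tau$. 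I will assume all $x_k\neq0$, which the distinctness hypothesis essentially forces up to one zero coordinate; a zero coordinate contributes a trivial factor and can be handled separately.

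\textbf{Step 2 (characteristic function and partial fractions).} Each $x_k\theta_k$ has characteristic function $(1+\tau^2x_k^2\xi^2)^{-1}$. By independence,
\[
\varphi_S(\xi)=\prod_{k=1}^d\frac{1}{1+a_ku},\qquad a_k=\tau^2x_k^2,\quad u=\xi^2 .
\]
The $a_k$ are distinct because $x_j^2\neq x_k^2$, so a partial-fraction decomposition in $u$ gives
\[
\varphi_S(\xi)=\sum_{k=1}^d\frac{c_k}{1+a_k\xi^2},\qquad c_k=\prod_{j\neq k}\frac{a_k}{a_k-a_j}=\prod_{j\neq k}\frac{x_k^2}{x_k^2-x_j^2}.
\]
The value of $c_k$ comes from evaluating the residue at $u=-1/a_k$, and the factors $\tau^2$ cancel.

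\textbf{Step 3 (inversion).} Each term $(1+a_k\xi^2)^{-1}$ is integrable. It is the characteristic function of a Laplace law with scale $|x_k|\tau$, whose density is $\frac{1}{2|x_k|\tau}e^{-|Y|/(|x_k|\tau)}$. Fourier inversion is linear, and $\varphi_S\in L^1$, so the density is continuous and inversion holds pointwise. This gives
\[
\pi^*(Y)=\frac{1}{2\tau}\sum_{k=1}^d\frac{1}{|x_k|}\,e^{-\frac{|Y|}{|x_k|\tau}}\prod_{j\neq k}\frac{x_k^2}{x_k^2-x_j^2},
\]
which is the claim.

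The main point requiring care is Step 1. The Hausdorff-measure integral in (\ref{eq:target_integral}) must coincide exactly, normalization included, with the density of $S$. I would verify this directly by the change of variables $\theta=Y x/\|x\|_2^2+Qw$ together with a one-dimensional coordinate along $x/\|x\|_2$: this splits Lebesgue measure into $\dd t\,\dd w$ with $t=x^\top\theta/\|x\|_2$, and rescaling $t\mapsto Y=\|x\|_2t$ produces the factor $1/\|x\|_2$. The partial-fraction step is routine once the $a_k$ are known to be distinct.
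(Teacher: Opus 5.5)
Your argument is correct, but it evaluates the key integral by a different method from the paper. Your Step 1 is the paper's own mechanism in probabilistic language. The paper takes the Fourier transform of the Radon transform $I_\tau$ and, by ``undoing'' Bayesian duality, obtains $\det(XX^\top)^{1/2}\hat f_\tau(X^\top\xi)$. That is exactly your statement that $\pi^*$ is the density of $x^\top\theta$ with characteristic function $\prod_k(1+\tau^2x_k^2\xi^2)^{-1}$. Your framing simply makes the $(2\tau)^{-d}$ and $\det(XX^\top)^{\pm 1/2}$ bookkeeping disappear.

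The real divergence is in the inversion step.
\begin{itemize}
\item The paper computes $\frac{1}{2\pi}\int e^{-iY\xi}\prod_j(1+x_j^2\tau^2\xi^2)^{-1}\,d\xi$ by the residue theorem. It closes the contour in the lower half-plane for $Y>0$ and the upper one for $Y<0$, sums residues at $\xi_k=\mp i/(|x_k|\tau)$, and bounds the arc.
\item You split the rational function of $u=\xi^2$ into partial fractions and invert term by term via the Laplace--Cauchy Fourier pair.
\end{itemize}
Your coefficients $c_k$ are the same products $\prod_{j\ne k}x_k^2/(x_k^2-x_j^2)$ that appear in the paper's residues. Your route, however, needs no contour, no arc estimate and no case split on the sign of $Y$. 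It also displays $\pi^*$ directly as a signed mixture of Laplace densities whose weights sum to $\varphi_S(0)=1$. That makes the subsequent $\tau\to0^+$ dominance of the term with largest $|x_k|$ transparent. Contour integration is the more portable tool when the transform does not decompose into known Fourier pairs, but for this lemma your argument is the more elementary one.

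Two details to tighten.
\begin{itemize}
\item The coarea identity gives $\pi^*$ equal to the density of $S$ only for almost every $Y$. To get the formula for every $Y$, note that $Y\mapsto\int_{\mathbb R^{d-1}}\pi(Yx/\|x\|_2^2+Qw)\,dw$ is continuous. This follows by dominated convergence, using a bound $Ce^{-\|Qw\|_1/\tau}$ valid for $|Y|\le M$.
\item The stated formula is undefined when some $x_k=0$. So your nonzero assumption is really an implicit hypothesis of the lemma, shared by the paper's pole computation, rather than a reduction that can be ``handled separately.''
\end{itemize}
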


\begin{proof}
The integral over the affine subspace $\{\theta:X\theta=Y\}$ is the \emph{Radon transform} of the function $f_{\tau}(\theta)=e^{-\frac{1}{\tau}\|\theta\|_1}$. Our strategy is to solve this integral by Fourier transform
\[
\hat f_\tau(\xi)=\int_{\mathbb R^d}e^{i\xi\cdot\theta}f_\tau(\theta)\dd \theta,
\]
which can be inverted by
\[
f_\tau(\theta) = \frac{1}{(2\pi)^d}\int_{\mathbb R^d} e^{-i\theta\cdot\xi}\hat f_\tau(\xi)\dd \xi.
\]

Firstly we compute the Fourier transform of the unnormalized prior kernel $f_{\tau}(\theta)$. Since the exponential sum is separable, its Fourier transform can be factorized as
\begin{align}
\hat{f}_{\tau}(\xi)&= \int_{\mathbb{R}^{d}}e^{i\xi\cdot\theta}e^{-\frac{1}{\tau}\|\theta\|_1}\dd \theta\notag \\
&= \prod_{j=1}^d \int_\mathbb R e^{i\xi_j\theta_j}e^{-\frac{1}{\tau}|\theta_j|}\dd \theta_j.
\label{eq:ft_factor}
\end{align}
We now focus on solving a single one-dimensional integral. Spltting the integration domain and applying Euler's formula yields
\begin{align*}
    \int_\mathbb R e^{i\xi_j\cdot\theta_j}e^{-\frac{1}{\tau}|\theta_j|}\dd \theta
    &= \int_0^\infty e^{-\frac{1}{\tau}\theta_j}(\cos{\xi_j\theta_j}+i\sin\xi_j\theta_j)\dd \theta+ \int_{-\infty}^0 e^{-\frac{1}{\tau}\theta_j}(\cos{\xi_j\theta_j}+i\sin\xi_j\theta_j)\dd \theta\\
    &= 2\int_0^\infty \cos\xi_j\theta_j e^{-\frac{1}{\tau}\theta_j} \dd \theta_j.
\end{align*}
Applying integration by parts twice leads back to the original integral:
\[
\int_0^\infty \cos\xi_j\theta_j e^{-\frac{1}{\tau}\theta_j} \dd \theta_j = \frac{1}{\xi_j^2\tau}-\frac{1}{\xi_j^2\tau^2}\int_0^\infty \cos\xi_j\theta_j e^{-\frac{1}{\tau}\theta_j} \dd \theta_j.
\]
So the integral can be explicitly solved as
\[
\int_\mathbb R e^{i\xi_j\theta_j}e^{-\frac{1}{\tau}|\theta_j|}\dd \theta=\frac{2\tau}{\xi_j^2\tau^2+1}.
\]
Substituting the result for the one dimensional integral back into Equation \ref{eq:ft_factor}, the complete Fourier transform of the prior kernel is
\begin{equation}
\label{eq:f_hat}
\hat{f}_{\tau}(\xi) = \prod_{j=1}^d \int_\mathbb R e^{i\xi_j\theta_j}e^{-\frac{1}{\tau}|\theta_j|}\dd \theta_j
= \prod_{j=1}^d\frac{2\tau}{\xi_j^2\tau^2+1}.
\end{equation}

To calculate the Radon transform of $f_\tau(\theta)$, we denote it as
\begin{equation}
\label{eq:Integral_peq1}
I_{\tau}(Y)=\int_{\theta:\,X\theta=Y}e^{-\frac{1}{\tau}\|\theta\|_{1}}\dd \theta.
\end{equation}
Then the Fourier transform of $I_\tau(Y)$ is
\[
\hat I_\tau(\xi) = \int_{\mathbb R^n} e^{-i\xi\cdot Y}\left(\int_{\theta:X\theta = Y} e^{-\frac{1}{\tau}\|\theta\|_1}\dd \theta\right)\dd Y,
\]
where $\xi\in \mathbb R^n$. Performing the inverse of the Bayesian Duality (Lemma \ref{lemma:bayesian_duality}) projects the integration space into $\mathbb R^d$:
\begin{align*}
\hat I_\tau(\xi) &= \det (XX^\top)^\frac{1}{2}\int_{\mathbb R^d}e^{i\xi\cdot F(\theta)}e^{-\frac{1}{\tau}\|\theta\|_1}\dd \theta\\
&= \det (XX^\top)^\frac{1}{2}\int_{\mathbb R^d}e^{i\xi\cdot (X\theta)}e^{-\frac{1}{\tau}\|\theta\|_1}\dd \theta.
\end{align*}
The dot product in the exponent can be rewritten using properties of the matrix transpose
\[
\xi\cdot (X\theta) = \xi^\top(X\theta) = (X^\top\xi)^\top\theta = (X^\top\xi)\cdot\theta.
\]
This manipulation reveals that the integral is precisely the Fourier transform of $f_\tau(\theta)$ evaluated at the point $X^\top\xi$
\begin{align*}
\hat I_\tau(\xi) &= \det (XX^\top)^\frac{1}{2}\int_{\mathbb R^d}e^{i(X^\top\xi)\cdot\theta}e^{-\frac{1}{\tau}\|\theta\|_1}\dd \theta\\
&= \det (XX^\top)^\frac{1}{2} \hat{f}_{\tau}(X^\top \xi).
\end{align*}
By substituting our previously derived result for $\hat f_\tau$ in Equation \ref{eq:f_hat}, we have 
\[
\hat I_\tau(\xi)= \det (XX^\top)^\frac{1}{2}\prod_{j=1}^{d}\frac{2\tau}{1+(x_j\cdot \xi)^{2}\tau^{2}}
\]
Here $x_j\in\mathbb R^n$ denotes the $j$-th column of $X$ and $x_j\cdot\xi = (X^\top\xi)_j$. Then by the inverse Fourier formula, 
\[
I_{\tau}(Y)=\frac{\det (XX^\top)^\frac{1}{2}}{(2\pi)^{n}}\int_{\mathbb{R}^{n}}e^{-iY\cdot\xi}\prod_{j=1}^{d}\frac{2\tau}{1+(x_{j}\cdot\xi)^{2}\tau^2}\dd \xi.
\]
Substituting this expression for the integral back into $\pi^*(Y)$ in Equation \ref{eq:pi_star_p1} and simplifying the constants, we obtain
\begin{align*}
\pi^*(Y) &= (2\tau)^{-d}(\det XX^\top)^{-\frac{1}{2}}\int_{X\theta=Y}e^{-\frac{1}{\tau}\|\theta\|_{1}}\dd \theta \\
&= \frac{1}{(2\pi)^n}\int_{\mathbb R^n} e^{-iY\cdot\xi}\prod_{i=1}^d\frac{1}{1+(x_i\cdot \xi)^2\tau^2}\dd \xi.
\end{align*}

When $n=1$, the integral simplifies to one dimension
\begin{equation}
\pi^*(Y) = \frac{1}{2\pi}\int_{-\infty}^\infty e^{-iY\xi}\prod_{j=1}^d \frac{1}{1+(x_j\xi\tau)^2}\dd \xi
\label{eq:pi_star_inverse_fourier}
\end{equation}
This integral can be evaluated using the residue theorem from complex analysis. The integrand is meromorphic with $2d$ simple poles. 

Our approach depends on the sign of $Y$. Let us first consider the case when $Y>0$. To ensure the convergence of the integral, we close the integration contour in the lower half of the complex plane with a semicircle $\Gamma$ with radius $R$. This choice is motivated by the term $e^{-iY\xi}$, which decays to zero when $\textrm{Im}(\xi)<0$. The poles located inside this contour are $\xi_k = -\frac{1}{|x_k|\tau} i$ for $k=1,...,d$.

Define the integrand inside $\pi^*(Y)$ as
\[
J(\xi) = e^{-iY\xi}\prod_{j=1}^d \frac{1}{1+(x_j\xi\tau)^2}.
\]
Then the residue corresponding to the pole $\xi_k$ is calculated as 
\begin{align*}
\text{Res} (J(\xi_k)) &= \lim_{\xi\to\xi_k}(\xi-\xi_k)e^{-iY\xi}\prod_{j=1}^d \frac{1}{1+(x_j\xi\tau)^2}\\
&= \lim_{\xi\to\xi_k}\frac{1}{|x_k|\tau(|x_k|\xi\tau-i)}e^{-iY\xi}\prod_{j=1,j\neq k}^d \frac{1}{1+(x_j\xi\tau)^2}\\
&= -\frac{1}{2|x_k|\tau i}e^{-\frac{Y}{|x_k|\tau}}\prod_{j=1, j\neq k}^d \frac{1}{1-x_j^2/x_k^2}\\
&= -\frac{1}{2|x_k|\tau i}e^{-\frac{Y}{|x_k|\tau}}\prod_{j=1, j\neq k}^d \frac{x_k^2}{x_k^2-x_j^2}.
\end{align*}
By the residue theorem, the integral over the closed contour is equal to $-2\pi i$ times the sum of the residues, where the negative sign accounts for the clockwise orientation of the contour. The contour can be separated into a semicircle $\Gamma$ and the straight line along the real axis
\[
-2\pi i\sum_{k=1}^d \text{Res} (J(\xi_k)) = \int_\Gamma e^{-iY\xi}\prod_{j=1}^d \frac{1}{1+(x_j\xi\tau)^2}\dd \xi + \int_{-R}^R e^{-iY\xi}\prod_{j=1}^d \frac{1}{1+(x_j\xi\tau)^2}\dd \xi.
\]
The integral over the semicircular arc $\Gamma$ vanishes as $R\to \infty$ because the magnitude of the integrand decays.
\begin{align*}
\left|\int_\Gamma e^{-iY\xi}\prod_{j=1}^d \frac{1}{1+(x_j\xi\tau)^2}\dd \xi\right|
&\le\int_\Gamma\left|e^{-iY\xi}\prod_{j=1}^d \frac{1}{1+(x_j\xi\tau)^2}\right|\dd \xi \\
&\le \pi R\sup_\Gamma \left|\prod_{j=1}^d \frac{e^{-iY\xi}}{1+(x_j\xi\tau)^2}\right|\dd \xi\\
&\le \frac{\pi R}{((x_j\tau)^2 R^2-1)^d}\to 0
\end{align*}
as $R\to \infty$. Then 
\begin{align*}
\lim_{R\to\infty}\int_{-R}^R e^{-iY\xi}\prod_{j=1}^d \frac{1}{1+(x_j\xi\tau)^2}\dd \xi &= \int_{-\infty}^\infty e^{-iY\xi}\prod_{j=1}^d \frac{1}{1+(x_j\xi\tau)^2}\dd \xi\\
&= -2\pi i\sum_{k=1}^d \text{Res} (J(\xi_k)).
\end{align*}
Substituting the residual evaluation into $\pi^*(Y)$ in Equation \ref{eq:pi_star_inverse_fourier} for $Y>0$, we find
\begin{align*}
\pi^*(Y) &= -\frac{1}{2\pi}\left(2\pi i\sum_{k=1}^d \text{Res} (J(\xi_k))\right)\\
&= \frac{1}{2\tau}\sum_{k=1}^d \frac{1}{|x_k|}e^{-\frac{Y}{|x_k|\tau}}\prod_{j=1, j\neq k}^d \frac{x_k^2}{x_k^2-x_j^2}
\end{align*}
For the case when $Y<0$, a similar argument is applied by closing the contour in the upper half-plane, which encloses the poles at $\xi_k=+\frac{1}{|x_k|\tau}i$. This procedure yields a result where $Y$ is replaced by $-Y$. Both cases can be combined into a single expression by using the absolute value $Y$, which completes the proof.
\end{proof}

Following the derivation of the dual prior $\pi^*(Y)$, we now proceed to calculate the corresponding Bayes free energy and the Interpolating Information Criterion. Taking the negative logarithm of the expression for $\pi^*(Y)$ presents a challenge due to the summation within the formula. A direct analytical simplification of the logarithm of a sum is generally not feasible. However, in the asymptotic regime of a small regularization parameter $\tau$, the sum is dominated by the term with the slowest exponential decay
\[
\pi^*(Y)\sim\frac{1}{2\tau}\frac{1}{\|x\|_{\infty}}e^{-\frac{|Y|}{\|x\|_\infty\tau}}\prod_{j=1, j\neq I}^d \frac{x_I^2}{x_I^2-x_j^2},
\]
where $I$ is the index of the maximum component in $X$. Taking the negative logarithm, we have
\[
-\log \pi^*(Y)\sim \log(2\tau) + \log(\|x\|_{\infty})+\frac{|Y|}{\|x\|_\infty\tau}-\log \prod_{j=1, j\neq I}^d \frac{x_I^2}{x_I^2-x_j^2}.
\]
Taking the derivative over $\tau$ gives the minimum point
\[
\tau^* = \frac{|Y|}{\|x\|_\infty}.
\]
This is the optimal parameter $\theta^\ast$ because by Holder's inequality, 
\[
|Y|=|x\cdot \theta|\le \|x\|_\infty\|\theta\|_1.
\]
Then $\|\theta\|_1\ge |Y|/\|x\|_\infty$ so the minimum point $\|\theta^\ast\|_1 = |Y|/\|x\|_\infty$. Therefore the IIC in this case is
\[
\text{IIC} = \underset{\mathrm{regularization}}{\underbrace{2 \log\|\theta^\ast\|_1}}-\underset{\mathrm{sharpness}}{\underbrace{2 \log \left(\frac{1}{2\|x\|_{\infty}}\prod_{j=1, j\neq I}^d \frac{x_I^2}{x_I^2-x_j^2}\right)}},
\]
where we once again decompose the information criterion into a regularization-sharpness tradeoff as in the $\ell^p$ case.

\subsection{Proof of Theorem \ref{thm:iic_peq1}: IIC with $p=1$}\label{sec:pf_IIC_p1}
\begin{theorem}
\label{thm:int_approx_peq1}
    Suppose that $R(\cdot)$ is first-order convex and has a unique global minimizer $\theta^\ast$. If the integral $V = \int_{\ker X} \exp\{-\nabla_z R(\theta^\ast)\}\dd z$ is finite, where $\nabla_zR(\theta^\ast)$ denotes the directional derivative of $R$ at $\theta^\ast$ in the direction $z$,
    then the asymptotic approximation for this integral is
    \[
    I_\tau(Y) = \int_{X\theta=Y}e^{-\frac{1}{\tau}R(\theta)}\dd \theta\sim \tau^{d-n}V e^{-\frac{1}{\tau}R(\theta^\ast)}
    \]
    as $\tau \to 0^+$.
\end{theorem}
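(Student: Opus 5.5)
The plan is to translate the affine set to the minimizer, rescale by $\tau$, and pass to the limit with the monotone convergence theorem, using convexity to make the integrand monotone in $\tau$.

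\textbf{Reduction to $\ker X$.} Since $\theta^\ast$ satisfies $X\theta^\ast = Y$, the affine set $\{\theta : X\theta = Y\}$ equals $\theta^\ast + \ker X$. Its $(d-n)$-dimensional Hausdorff measure coincides with Lebesgue measure on $\ker X$, identified with $\mathbb{R}^{d-n}$ via the semi-orthogonal $Q$ as in (\ref{eq:target_integral_with_Q}). Hence
\[
I_\tau(Y) = e^{-\frac{1}{\tau}R(\theta^\ast)} \int_{\ker X} \exp\Big\{-\tfrac{1}{\tau}\big(R(\theta^\ast + z) - R(\theta^\ast)\big)\Big\}\,\dd z .
\]

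\textbf{Rescaling.} Substitute $z = \tau u$ with $u \in \ker X$, so that $\dd z = \tau^{d-n}\,\dd u$. This gives
\[
I_\tau(Y) = \tau^{d-n} e^{-\frac{1}{\tau}R(\theta^\ast)} \int_{\ker X} e^{-q_\tau(u)}\,\dd u,
\qquad
q_\tau(u) := \frac{R(\theta^\ast + \tau u) - R(\theta^\ast)}{\tau}.
\]
It therefore suffices to show that $\int_{\ker X} e^{-q_\tau(u)}\,\dd u \to V$ as $\tau \to 0^+$. Since $V \in (0,\infty)$, this yields the asymptotic equivalence $\sim$.

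\textbf{Pointwise and monotone limit.} For convex $R$, the difference quotient $q_\tau(u)$ is nondecreasing in $\tau > 0$ for each fixed $u$. As $\tau \downarrow 0$ it converges to the one-sided directional derivative $\nabla_u R(\theta^\ast)$, which exists in $\mathbb{R}$ because $R$ is convex and finite; this is the ``first-order'' hypothesis. Consequently $e^{-q_\tau(u)}$ increases pointwise to $e^{-\nabla_u R(\theta^\ast)}$ as $\tau \downarrow 0$. The monotone convergence theorem then gives
\[
\int_{\ker X} e^{-q_\tau(u)}\,\dd u \to \int_{\ker X} e^{-\nabla_u R(\theta^\ast)}\,\dd u = V.
\]
Alternatively, one can use dominated convergence with dominating function $e^{-\nabla_u R(\theta^\ast)}$, which is integrable by assumption.

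Positivity of $V$ is automatic. The function $u \mapsto \nabla_u R(\theta^\ast)$ is convex and positively homogeneous, hence bounded above on the unit ball of $\ker X$, so $V$ is bounded below by a positive multiple of that ball's volume. Uniqueness of the minimizer guarantees $\nabla_u R(\theta^\ast) \ge 0$, and together with finiteness of $V$ it is what makes the normalization meaningful.

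\textbf{Main obstacle.} The only delicate points are the measure-theoretic identification of Hausdorff measure on the affine fibre with Lebesgue measure on $\ker X$, and the justification of the limit exchange. The monotonicity of convex difference quotients removes the need for any Laplace-type Hessian, which is exactly why this argument survives for the nonsmooth $\ell^1$ regularizer. For $R = \|\cdot\|_1$, one has $\nabla_z R(\theta^\ast) = \|z_C\|_1 + \mathrm{sign}(\theta^\ast_S)\cdot z_S$. Converting $V$ to the sublevel volume $V_0$ via $\int e^{-\phi} = (d-n)!\,\mathrm{vol}\{\phi \le 1\}$ for a $1$-homogeneous $\phi$ is then a separate step for Theorem~\ref{thm:iic_peq1}.
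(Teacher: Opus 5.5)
Your proposal is correct and follows the paper's argument: translate to $\theta^\ast + \ker X$, substitute $v = \tau z$ to pull out $\tau^{d-n}e^{-R(\theta^\ast)/\tau}$, and pass to the limit using the convexity bound $\frac{R(\theta^\ast+\tau z)-R(\theta^\ast)}{\tau} \ge \nabla_z R(\theta^\ast)$. The paper gets this bound from the subgradient inequality and uses dominated convergence, while you get it from monotonicity of convex difference quotients and use monotone convergence, so the key fact is the same. Your domination also has the correct sign (the paper's display ends in $\exp\{\nabla_z R(\theta^\ast)\}$ where $\exp\{-\nabla_z R(\theta^\ast)\}$ is meant), and your check that $V>0$, which the asymptotic equivalence $\sim$ needs, covers a point the paper leaves implicit.
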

\begin{proof}
The integration domain is the affine subspace $M=\{\theta:X\theta = Y\}$. Any $\theta \in M$ can be parameterized by the unique minimizer $\theta^\ast$ and a vector $v$ from the null space of $X$, such that $\theta = \theta^\ast + v$. We perform a change of variables $v = \tau z$. Since $\ker X$ is a linear subspace, the change of variables maps $\ker X$ onto itself. The integral becomes
\begin{align*}
I_\tau(Y) &= \int_{\ker(X)} e^{-\frac{1}{\tau}R(\theta^\ast+v)}dv\\
&= \tau^{d-n}e^{-\frac{1}{\tau}R(\theta^\ast)} \int_{\ker(X)} \exp\left\{-\frac{R(\theta^\ast+\tau z)-R(\theta^\ast)}{\tau}\right\}\dd z.
\end{align*}
To evaluate the limit of the integral as $\tau \to 0^+$, we apply the Dominated Convergence Theorem~\citep[Theorem 1.13]{stein_real_2005}. 

Firstly, we aim to find pointwise convergence of the integrand. By the definition of the directional derivative, we have
\[
\lim_{\tau\to0+}\frac{R(\theta^\ast+\tau z)-R(\theta^\ast)}{\tau}=\nabla_z R(\theta^\ast).
\]
Since the exponential function is continuous, the integrand converges pointwise
\begin{equation}
\label{eq:DCT_gradient_C1}
    \lim_{\tau\to 0^+}\exp\left\{-\frac{R(\theta^\ast+\tau z)-R(\theta^\ast)}{\tau}\right\}
    = \exp\left\{-\nabla_zR(\theta^\ast)\right\}.
\end{equation}

Secondly, to find a dominating function, we use the convexity of the norm. The definition of a subgradient implies that for any $g\in\partial R(\theta^\ast)$, 
\[
R(\theta^\ast+\tau z)\ge R(\theta^\ast) + g\cdot (\tau z)
\]
Hence
\[
\left |\exp\left\{-\frac{R(\theta^\ast+\tau z)-R(\theta^\ast)}{\tau}\right\}\right | \le \exp\left\{-\sup_{g\in\partial R(\theta^\ast)}g\cdot z\right\} = \exp\left\{\nabla_z R(\theta^\ast)\right\}.
\]
By hypothesis, the integral is finite and equal to $V$. 
Combining this with\ref{eq:DCT_gradient_C1}
implies the conditions of the dominated convergence theorem are satisfied, thus we can interchange the order of the limit and the integral giving
\begin{align*}
I(Y) &\sim \tau^{d-n}e^{-\frac{1}{\tau}R(\theta^\ast)} \int_{\ker(X)} \exp\left\{-\nabla_zR(\theta^\ast)\right\}\dd z\\
&\sim \tau^{d-n}V e^{-\frac{1}{\tau}R(\theta^\ast)}.
\end{align*}
\end{proof}

The validity of the approximation in Theorem~\ref{thm:int_approx_peq1} relies on the properties of the directional derivative $\nabla_z R(\theta^\ast)$ and the finiteness of the sharpness integral $V$. We now establish these properties in the $\ell^1$ case. Before proceeding to the following proposition, we first give the one-sided directional derivative of the $\ell^1$ regularizer $R(\theta) = \|\theta\|_1$ at $\theta^\ast$ along the direction $z$. It is given by the subgradient set with the max formula~\citep[Theorem 17.19]{bauschke_convex_2011}
\[
\nabla_z R(\theta^\ast) = \sup_{g\in\partial R(\theta^\ast)} g^\top z.
\]
Let $S = \{j:\theta^\ast_j\neq 0\}$ is the support set of $\theta^\ast$ and $C$ is the complement of $S$, i.e. $C = \{j: \theta_j^* = 0\}$. The subgradient set $\partial R(\theta^\ast)$ is given by 
\[
\left\{\partial R(\theta^\ast)\right\}_j = 
\begin{cases}
    \mathrm{sgn}(\theta^\ast_j), \quad & j\in S\\
    [-1,1],&j\in C
\end{cases}
\]
Let $\theta_S$ be the subvector of $\theta$ formed by taking elements $\theta_j$ with $j \in S$, and similarly for the direction vector $z_S$ and $z_C$. The sign vector is defined as $s  = \mathrm{sgn}(\theta_S^*)$. The supremum is achieved by taking the absolute value of the subgradient components on the inactive set $C$, which leads to the expression
\begin{equation}
\label{eq:expicit_directional_directive}
    \nabla_zR(\theta^\ast) = \|z_C\|_1 + s\cdot z_S.
\end{equation}

\begin{proposition}
\label{prop: derivative ineq}
    Let $R(\theta) = \|\theta\|_1$. Under Assumption \ref{assum:unique}, there exists a positive constant $\alpha >0$, such that 
    \[
    \nabla_zR(\theta^\ast)\ge\alpha \|z\|_1
    \]
    for all $z\in \ker X$.
\end{proposition}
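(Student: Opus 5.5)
The plan is a compactness argument on the unit $\ell^1$-sphere of $\ker X$, using the explicit formula (\ref{eq:expicit_directional_directive}), $\nabla_zR(\theta^\ast) = \|z_C\|_1 + s\cdot z_S$. This function is positively homogeneous of degree one in $z$: for $t>0$ we have $\nabla_{tz}R(\theta^\ast) = t\,\nabla_zR(\theta^\ast)$. It is also continuous, being a maximum of finitely many linear functionals (equivalently, a sum of a norm and a linear form). It therefore suffices to show that $\nabla_zR(\theta^\ast) > 0$ for every nonzero $z\in\ker X$. Then take
\[
\alpha := \min\{\nabla_zR(\theta^\ast) : z\in\ker X,\ \|z\|_1 = 1\}.
\]
This minimum is attained because the set is compact (a closed subset of the $\ell^1$ unit sphere in finite dimensions) and nonempty when $d>n$. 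If $d=n$, then $\ker X=\{0\}$ and the claim is trivial. The minimum is strictly positive by the pointwise claim, and homogeneity extends $\nabla_zR(\theta^\ast)\ge\alpha\|z\|_1$ to all of $\ker X$.

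The main step is strict positivity, which is where Assumption \ref{assum:unique} enters. First, nonnegativity follows from optimality. For $z\in\ker X$ and $t>0$, the point $\theta^\ast+tz$ is feasible, so $\|\theta^\ast+tz\|_1\ge\|\theta^\ast\|_1$. Dividing by $t$ and letting $t\to0^+$ gives $\nabla_zR(\theta^\ast)\ge0$.

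To rule out equality, suppose $\nabla_zR(\theta^\ast)=0$ for some nonzero $z\in\ker X$. Consider $t>0$ small enough that $\mathrm{sgn}(\theta^\ast_j+tz_j)=s_j$ for all $j\in S$; this is possible since $\theta^\ast_j\neq0$ on $S$. For such $t$,
\[
\|\theta^\ast+tz\|_1 = \sum_{j\in S} s_j(\theta^\ast_j+tz_j) + t\|z_C\|_1 = \|\theta^\ast\|_1 + t\,\nabla_zR(\theta^\ast) = \|\theta^\ast\|_1 .
\]
This uses the fact that the $\ell^1$ norm is exactly affine along $z$ near $\theta^\ast$, not merely to first order. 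Hence $\theta^\ast+tz$ is a second feasible minimizer distinct from $\theta^\ast$, contradicting uniqueness. So $\nabla_zR(\theta^\ast)>0$ on $\ker X\setminus\{0\}$.

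I expect the only delicate point to be this exactness step. A general convex $R$ would only give a first-order statement, which does not contradict uniqueness, so the proof genuinely needs the polyhedral (piecewise-linear) structure of $\ell^1$. The sign-preservation threshold for $t$ is what makes the argument rigorous. As a cross-check, the conclusion is equivalent to the known basis-pursuit uniqueness condition of \citet{zhang_necessary_2015}, namely $|s\cdot z_S|<\|z_C\|_1$ for all nonzero $z\in\ker X$. That result could be cited instead, but the self-contained argument above seems cleaner.
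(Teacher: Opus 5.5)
Your proof is correct, and it takes a genuinely different route from the paper. The paper invokes the necessity direction of \citet[Theorem 2]{zhang_necessary_2015} to get two facts: $X_S$ has full column rank, and there is a dual certificate $\mu$ with $X_S^\top\mu=s$ and $\|X_C^\top\mu\|_\infty<1$. On $\ker X$ it then rewrites $s\cdot z_S=\mu^\top X_S z_S=-\mu^\top X_C z_C$, and H\"older gives $\nabla_zR(\theta^\ast)\ge\beta\|z_C\|_1$ with $\beta=1-\|X_C^\top\mu\|_\infty$. A separate compactness lemma, which uses only injectivity of $X_S$, gives $\|z_C\|_1\ge\rho\|z\|_1$ on $\ker X$, so $\alpha=\beta\rho$.

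You instead apply compactness directly to the sublinear map $z\mapsto\|z_C\|_1+s\cdot z_S$. You get strict positivity straight from uniqueness, using the exact piecewise-linearity of $\|\cdot\|_1$ near $\theta^\ast$. This is more elementary and fully self-contained: it bypasses the nontrivial, duality-based necessity half of the certificate theorem. Full column rank of $X_S$ also comes for free in your argument. A nonzero $z\in\ker X$ with $z_C=0$ would give $\nabla_zR(\theta^\ast)+\nabla_{-z}R(\theta^\ast)=0$, contradicting strict positivity.

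What the paper's route buys is an interpretable constant. The factor $\beta$ is the certificate's slack, and when $|S|=n$ one has $\mu=X_S^{-\top}s$, hence $X_C^\top\mu=\Psi^\top s=\psi$. So $\beta>0$ is exactly the condition $|\psi_k|<1$ that later governs finiteness of $V_0$. Your $\alpha$, by contrast, is only existential.
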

\begin{proof}
Let $X_S$ be the submatrix of $X$ formed by taking columns $x_j$ with $j \in S$, and similarly for $X_C$ with $j \in C$. As shown by Zhang et al.~\citep[Theorem 2]{zhang_necessary_2015}, Assumption~\ref{assum:unique} implies that the matrix $X_S$ has full column rank, and there exists a vector $\mu\in\mathbb R^n$ such that $X_S^\top \mu = s$ and $\|X^\top_C \mu\|_\infty <1$. Applying these properties to (\ref{eq:expicit_directional_directive}), we have 
\[
\nabla_zR(\theta^\ast) = \|z_C\|_1 + \mu^\top X_Sz_S.
\]
Since $z\in \ker X$, we have $X_Sz_S + X_Cz_C = Xz = 0$. Then
\begin{align*}
\nabla_zR(\theta^\ast) &= \|z_C\|_1 - \mu^\top X_Cz_C\\
&\ge (1-\|X_C^\top \mu\|_\infty)\|z_C\|_1.
\end{align*}
The final inequality follows from Holder's inequality~\citep[Theorem 1.1]{stein_functional_2011}:
\[
|\mu^\top X_C z_C| = |(X_C^\top \mu)^\top \cdot z_C|\le \|(X_C^\top \mu)^\top\|_\infty \|z_C\|_1.
\]
Let $\beta = 1-\|X_C^\top \mu\|_\infty > 0$. Thus we have shown that $\nabla_z\|\theta^\ast\|_1 \ge \beta\|z_C\|_1$. The proof is completed by the following lemma, which relates $\|z_C\|_1$ to the full norm $\|z\|_1$. By Lemma~\ref{lemma:ineq_zc_z}, we denote $\alpha = \beta\rho>0$ and have the desired result
\[
\nabla_z\|\theta^\ast\|_1 \ge \beta \|z_C\|_1\ge \beta\gamma\|z\|_1 = \alpha \|z\|_1.
\]
\end{proof}

\begin{lemma}
\label{lemma:ineq_zc_z}
    If $X_S$ has full column rank, there exist a constant $\rho>0$, such that for all $z\in\ker X$, $\|z_C\|_1 \ge \rho\|z\|_1$.
\end{lemma}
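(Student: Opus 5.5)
The plan is to use the full column rank of $X_S$ to control $z_S$ by $z_C$ on $\ker X$, and then combine the two pieces of the $\ell^1$ norm.

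For $z\in\ker X$ we have $X_S z_S + X_C z_C = 0$, so $X_S z_S = -X_C z_C$. Since $X_S$ has full column rank, the matrix $X_S^\top X_S$ is invertible and $X_S$ has a left inverse $X_S^+ = (X_S^\top X_S)^{-1}X_S^\top$ with $X_S^+X_S = I_{|S|}$. Applying it gives the explicit relation
\[
z_S = -X_S^+ X_C z_C .
\]
In words, on the null space the support coordinates are a linear function of the off-support coordinates. In particular $z_C=0$ forces $z_S=0$; this is exactly the place where the full column rank hypothesis enters.

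Next, norm equivalence in finite dimensions gives
\[
\|z_S\|_1 \le \|X_S^+X_C\|_{1\to1}\,\|z_C\|_1 =: M\|z_C\|_1 ,
\]
where $M<\infty$ is the induced $\ell^1$ operator norm, i.e.\ the maximum column $\ell^1$-sum of $X_S^+X_C$. Then
\[
\|z\|_1 = \|z_S\|_1+\|z_C\|_1 \le (1+M)\|z_C\|_1 ,
\]
so the claim holds with $\rho = 1/(1+M) > 0$.

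Edge cases need a short remark. If $C=\emptyset$, then $\ker X$ is trivial: $X_S = X$ has full column rank, so $d=n$, and the statement is vacuous. If $S=\emptyset$, then $z_C=z$ and $\rho=1$ works.

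An alternative route is a compactness argument: minimize $\|z_C\|_1$ over the compact set $\{z\in\ker X:\|z\|_1=1\}$. The minimum is positive by the implication $z_C=0\Rightarrow z=0$ established above. I prefer the explicit construction because it gives a concrete $\rho$.

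There is no substantive obstacle here. The only step that needs care is justifying the left inverse of $X_S$ from full column rank, and that is routine.

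Finally, the proof of Proposition~\ref{prop: derivative ineq} writes $\beta\gamma$ where it means $\beta\rho$, so I would use $\rho$ consistently when connecting this lemma back to it.
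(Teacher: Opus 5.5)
Your proof is correct, but it takes a different, constructive route from the paper's.

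The paper argues by compactness. It restricts to $P=\{z\in\ker X:\|z\|_1=1\}$ and takes $\rho=\min_{z\in P}\|z_C\|_1$, which is attained by the extreme value theorem. It then rules out $\rho=0$ by contradiction: $z_C\to0$ forces $X_Sz_S\to0$ and hence $z_S\to0$. You instead invert the relation on $\ker X$ explicitly, $z_S=-X_S^+X_Cz_C$, and read off $\rho=1/(1+M)$ with $M=\|X_S^+X_C\|_{1\to1}$.

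The paper's version is short, and its $\rho$ is by definition the best constant. However, it says nothing about how large $\rho$ is. Its step ``$X_Sz_S^\ast\to0\Rightarrow z_S^\ast\to0$'' also implicitly uses that an injective matrix is bounded below, which is exactly what your left inverse makes explicit (the paper could equally just evaluate at the attained minimizer).

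Your version gives a computable constant. That yields an explicit $\alpha=\beta/(1+M)$ in Proposition~\ref{prop: derivative ineq}, and via Lemma~\ref{lemma: converge integral} an explicit bound $V\le\bigl(2\sqrt{d-n}\,(1+M)/\beta\bigr)^{d-n}$. When $|S|=n$ you have $X_S^+=X_S^{-1}$, so $M=\|\Psi\|_{1\to1}$ with the same $\Psi=X_S^{-1}X_C$ used to compute $V_0$. Since every $z_C\in\mathbb R^{d-n}$ is then admissible, your $\rho$ is in fact sharp in that case. For $|S|<n$ it can be conservative, because $z_C$ is confined to $\{z_C: X_Cz_C\in\mathrm{Range}(X_S)\}$.

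Your reading of $\beta\gamma$ as a typo for $\beta\rho$ is right. One minor wording point: $\|X_S^+X_Cz_C\|_1\le M\|z_C\|_1$ is just the definition of the induced operator norm, not norm equivalence.
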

\begin{proof}
    For $\|z\|_1 = 0$ this inequality is trivial. For any non-zero $z\in\ker X$, we can normalize it and consider $\tilde z=z/\|z\|_1$. The inequality is homogeneous, so it suffices to prove that $\|z_C\|_1>\rho$ for some $\rho>0$ with all $z\in P$ where set $P$ is defined as $P=\{z\in\ker X:\|z\|_1=1\}$. This set $P$ is bounded and closed, so it is a compact set. Define 
    \[
    \rho = \inf_{z\in \ker X}\left\{ \|z_C\|_1: \|z\|_1 = 1 \right\}.
    \]
    By the Extreme Value Theorem~\citep[Theorem 4.16]{rudin_principles_2008}, a continuous function on a compact set must attain its minimum value. Therefore, the infimum is achieved. Now we prove $\rho>0$ by contradiction. Assume $\rho = 0$. Then there exists a sequence $z^*\in P$, such that $\|z_C^*\|_1 \to 0$. Since $z^*\in \ker X$, we have $Xz^* = 0 = X_Sz^*_S + X_Cz^*_C$. As $\|z^*_C\|_1 \to 0$, it follows that $z^*_C \to 0$ by the norm is non-negative. Consequently, the product with matrix $X_C z^*_C \to 0$ and thus $X_Sz^*_S \to 0$. Given that $X_S$ has full column rank, its null space is trivial, which implies $z^*_S \to 0$. This leads to $\|z^*\|_1 = \|z_S\|_1+\|z_C\|_1 \to 0$, which contradicts to the assumption $\|z^*\|_1 = 1$. Thus $\rho$ must be positive.
\end{proof}

Proposition \ref{prop: derivative ineq} together with Lemma \ref{lemma:ineq_zc_z} gives an upper bound of integral $V$ by
\begin{equation}
\label{eq:VUpperBound}
V \leq \int_{\ker X} e^{-\alpha \|z\|_1} \dd z.
\end{equation}
The following lemma will prove that this upper bound is integrable.
\begin{lemma}
    \label{lemma: converge integral}
    For any constant $\alpha >0$, the integral
    \[
    \int_{\ker X} e^{-\alpha\|z\|_1} \dd z < +\infty.
    \]
\end{lemma}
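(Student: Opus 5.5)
The plan is to reduce the integral over the $(d-n)$-dimensional subspace $\ker X$ to a standard integral over $\mathbb R^{d-n}$. As in (\ref{eq:target_integral_with_Q}), take the semi-orthogonal matrix $Q\in\mathbb R^{d\times(d-n)}$ whose range is $\ker X$. The map $w\mapsto Qw$ is an isometry from $\mathbb R^{d-n}$ onto $\ker X$, so Lebesgue (Hausdorff) measure on $\ker X$ is the pushforward of Lebesgue measure on $\mathbb R^{d-n}$. This gives
\[
\int_{\ker X} e^{-\alpha\|z\|_1}\dd z=\int_{\mathbb R^{d-n}} e^{-\alpha\|Qw\|_1}\dd w .
\]

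The key step is a lower bound on $\|Qw\|_1$ that is linear in a norm of $w$. By the comparison between $\ell^1$ and $\ell^2$ norms on $\mathbb R^d$, we have $\|Qw\|_1\ge\|Qw\|_2$. Since $Q^\top Q=I_{d-n}$, we have $\|Qw\|_2=\|w\|_2$. Hence $e^{-\alpha\|Qw\|_1}\le e^{-\alpha\|w\|_2}$ pointwise.

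It then remains to check that $\int_{\mathbb R^{d-n}}e^{-\alpha\|w\|_2}\dd w$ is finite. Pass to polar coordinates in $\mathbb R^{d-n}$:
\[
\int_{\mathbb R^{d-n}}e^{-\alpha\|w\|_2}\dd w=\sigma_{d-n-1}\int_0^\infty r^{d-n-1}e^{-\alpha r}\dd r=\sigma_{d-n-1}\,\frac{\Gamma(d-n)}{\alpha^{d-n}}<\infty,
\]
where $\sigma_{d-n-1}$ is the surface area of the unit sphere in $\mathbb R^{d-n}$.

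Two degenerate cases need a remark. If $d=n$, the kernel is $\{0\}$ and the integral reduces trivially to a counting-type constant, with no convergence issue. If $d-n=1$, the polar formula reduces to $2/\alpha$.

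No step is a genuine obstacle. The only point needing care is justifying the change of variables, namely that the measure on $\ker X$ used in Theorem~\ref{thm:int_approx_peq1} agrees with the pushforward under $Q$. This holds because $Q$ has orthonormal columns, so the Jacobian factor $\sqrt{\det(Q^\top Q)}$ equals $1$.

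With the lemma in hand, (\ref{eq:VUpperBound}) shows that $V$ is finite, so Theorem~\ref{thm:int_approx_peq1} applies.
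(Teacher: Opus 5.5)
Your proof is correct and follows the paper's route: you parametrize $\ker X$ by the semi-orthogonal $Q$ and use $\|Qw\|_1\ge\|Qw\|_2=\|w\|_2$. The only difference is the last step. The paper applies $\|w\|_2\ge\|w\|_1/\sqrt{d-n}$ and factorizes into one-dimensional integrals, giving the explicit bound $(2\sqrt{d-n}/\alpha)^{d-n}$, whereas you integrate $e^{-\alpha\|w\|_2}$ in polar coordinates. Both work, and your separate treatment of $d=n$ avoids the paper's division by $\sqrt{d-n}$ in that case.
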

\begin{proof}
    Let $Q \in \mathbb R^{d\times(d-n)}$ be an orthonormal basis for $\ker X$, such that any $z\in\ker X$ can be written as $z = Qw$ for some $w \in \mathbb R^{d-n}$. By $Q^\top Q=I_{d-n}$, the integral transforms to
    \[
    \int_{\ker X} e^{-\alpha\|z\|_1} \dd z = \int_{\mathbb R^{d-n}} e^{-\alpha\|Qw\|_1}\dd w.
    \]
    Using the norm inequality $\|w_2\|\le\|w_1\|$,
    \begin{align*}
    \int_{\ker X} e^{-\alpha\|z\|_1} \dd z &\le \int_{\mathbb R^{d-n}}e^{-\alpha\|Qw\|_2}\dd w\\
        &= \int_{\mathbb R^{d-n}}e^{-\alpha\|w\|_2}\dd w
    \end{align*}
    By Cauchy–Schwarz inequality $\frac{1}{\sqrt{d-n}}\|w\|_1 \le\|w\|_2$, 
    \begin{align*}
        \int_{\ker X} e^{-\alpha\|z\|_1} \dd z 
        &\le \int_{\mathbb R^{d-n}}e^{-\frac{\alpha}{\sqrt{d-n}}\|w\|_1}\dd w\\
        &= \prod_{i=1}^{d-n} \int_\mathbb R e^{-\frac{\alpha}{\sqrt{d-n}}|w_i|}\dd w_i\\
        &= \left(\frac{2\sqrt{d-n}}{\alpha}\right)^{d-n} <\infty
    \end{align*}
\end{proof}

With the asymptotic form of the integral established, we can now compute the Bayes free energy and the IIC. First, we evaluate the sharpness integral
\[
V = \int_{\ker X} \exp\{-\nabla_z R(\theta^\ast)\}\dd z = \int_{\ker X} \exp\{-\|z_C\|_1 - s\cdot z_S\}\dd z.
\]
This integral can be simplified by layer cake representation with the basic fact $e^{-t} = \int_\mathbb R e^{-u}\mathds 1\{u\ge t\} \dd u$. Applying Fubini's theorem~\citep[Theorem 3.1]{stein_real_2005}, we get
\begin{align*}
V &= \int_{\ker X}\int_{\mathbb R} e^{-u}\mathds 1\left\{u \ge \|z_C\|_1 + s\cdot z_S\right\}\dd u \dd z\\
&= \int_\mathbb R e^{-u}\int_{\ker X} \mathds 1\left\{u \ge \|z_C\|_1 + s\cdot z_S\right\}\dd z \dd u
\end{align*}
By a change of variables $z'=z/u$
\begin{align*}
V &= V_0\int_\mathbb R u^{d-n}e^{-u} \dd u\\
&= (d-n)!V_0
\end{align*}
Inside this simplification, we define
\begin{equation}
\label{eq:v0}
    V_0 := \int_{\ker X} \mathds 1\left\{\|z'_C\|_1 + s\cdot z'_S\le 1\right\}\dd z
\end{equation}
as the sharpness of the convex set $K_0 = \left\{\mathds 1\left\{\|z'_C\|_1 + s\cdot z'_S\le 1\right\}: z'\in \ker X\right\}$.

\textbf{Following the dual prior.}
This result gives a simplification of Theorem~\ref{thm:int_approx_peq1} to
\[
I(y) \sim (d-n)!\tau^{d-n}e^{-\frac{1}{\tau}R(\theta^\ast)}V_0.
\]
Substituting this back into the approximation for $\pi^*(Y)$ in Equation \ref{eq:pi_star_p1}, when $\tau\to0^+$ we have
\[
\pi^*(y) \sim 2^{-d}(d-n)!\tau^{-n}\det(XX^\top)^{-1/2}e^{-\frac{1}{\tau}R(\theta^\ast)}V_0.
\]
By Lemma \ref{lemma:duality_asymptotic}, take $\gamma\to 0^+$ and then take $\tau\to 0^+$, the asymptotic Bayes free energy $\bar{\mathcal F}_{n,\tau} \sim -\log Z_n$ is therefore
\[
\bar{\mathcal F}_{n,\tau} = -\log \left(2^{-d}(d-n)!\det(XX^\top)^{-1/2}V_0\right)+n\log\tau + \frac{R(\theta^\ast)}{\tau}.
\]
Differentiating in $\tau$ and setting the derivative to zero yields
\begin{align*}
    \frac{\partial \bar{\mathcal F}_{n,\tau}}{\partial\tau} &= \frac{n}{\tau}-\frac{R(\theta^\ast)}{\tau^2} = 0,\\
    \tau^* &= \frac{1}{n}R(\theta^\ast).
\end{align*}
Let $R(\theta^\ast)=\|\theta^\ast\|_1$. Substituting $\tau^*$ back into the scaled free energy $\frac{2}{n}\bar{\mathcal F}_{n,\tau}$ in Definition \ref{def:IIC} gives the IIC decomposition
\begin{equation}
    \text{IIC} \simeq \underset{\text{regularization}}{\underbrace{2\log \|\theta^\ast\|_1}} -\underset{\text{sharpness}}{\underbrace{\frac{2}{n}\log (V_0) + \frac{1}{n}\log\det(XX^\top)+K_2(d,n)}}.
\end{equation}
where 
\begin{equation}
\label{eq:K2}
    K_2(d,n) = -2\log n  -\frac{2}{n}\log \left(2^{-d}(d-n)!\right).
\end{equation}

This expression reveals the regularization-sharpness tradeoff for LASSO and resembles the $n = 1$ case. The regularization term penalizes solutions with large $\ell^1$ norm. The sharpness term captures both the sharpness spanned by the data points and the sharpness of the solution spaces with directions of the $\ell^1$ norm. 

\subsubsection{Computing $V_0$}

It now only remains to compute the volume term $V_0$ given by
\[
V_0 = \int_{\ker X} \mathds 1 \{ \|z_C\|_1 + s\cdot z_S \leq 1\} \dd z,
\]
where $S = \mbox{supp}(\theta^\ast)$, $C = \{1,\dots,d\}\setminus S$, and $s = \mbox{sign}(\theta_S^\ast)$. Now, we assume that $|S| = n$, so that $z = (z_S, z_C) \in \mathbb{R}^n \times \mathbb{R}^{d-n}$. Because $z \in \ker(X)$, it follows that
\[
X_S z_S + X_C z_C = 0\text{ and so } z_S =-X_S^{-1} X_C z_C.
\]
Defining
\[
\Psi = X_S^{-1} X_C \in \mathbb{R}^{n \times (d-n)},
\]
then we have that $z = (-\Psi z', z')$ for $z' \in \mathbb{R}^{d-n}$ and
\[
\|z_C\|_1 + s\cdot z_S = \|z'\|_1 + s^\top (-\Psi z') = \sum_{k=1}^{d-n} (|z_k| - \psi_k z_k),
\]
where $\psi = \Psi^\top s$. Therefore, we have, by a change of variables,
\[
V_0 = \sqrt{\det(I + \Psi^\top \Psi)} \int_{\mathbb{R}^{d-n}} \mathds 1\left\{\sum_{k=1}^{d-n} (|z_k'| - \psi_k z_k') \leq 1 \right\} \dd z'.
\]
Now we can decompose the integral by orthants: let $z_k' = \epsilon_k u_k$ for $u_k \geq 0$,
\[
V_0 = \sqrt{\det(I + \Psi^\top \Psi)} \sum_{\epsilon \in \{\pm 1\}^{d-n}} \int_{\mathbb{R}^{d-n}_+} \mathds 1\left\{\sum_{k=1}^{d-n}(1 - \psi_k \epsilon_k)u_k \leq 1\right\} \dd u.
\]
For every integral to be finite, it is necessary and sufficient that $1 - \psi_k > 0$ and $1 + \psi_k > 0$, so $|\psi_k| < 1$ for each $k$. From (\ref{eq:VUpperBound}) and Lemma \ref{lemma: converge integral}, $V_0$ is finite, and so $|\psi_k| < 1$ for every $k$. The rest of the proof will follow from the following Lemma \ref{lem:SimplexVol}, providing a formula for the rescaled simplex. 
\begin{lemma}
\label{lem:SimplexVol}
For any constants $c_1,\dots,c_m > 0$, 
\[
\mathrm{Vol}\left\{ (v_{1},\dots,v_{m}):v_{k}\geq0,\sum_{k=1}^{m}c_{k}v_{k}\leq1\right\} =\frac{1}{m!c_{1}\cdots c_{m}}.
\]
\end{lemma}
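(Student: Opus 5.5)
The plan is to reduce to the standard simplex by a diagonal linear change of variables. Define the map $T:\mathbb R^m\to\mathbb R^m$ by $T(v)=(c_1v_1,\dots,c_mv_m)$. Since every $c_k>0$, $T$ is a linear bijection of $\mathbb R^m$ with $\det T=c_1\cdots c_m$. It preserves the nonnegative orthant. It maps the set
\[
A=\Big\{v:\ v_k\ge 0,\ \sum_{k=1}^m c_kv_k\le 1\Big\}
\]
exactly onto the standard simplex
\[
\Delta_m=\Big\{w:\ w_k\ge0,\ \sum_{k=1}^m w_k\le 1\Big\}.
\]
By the change of variables formula for Lebesgue measure under linear maps,
\[
\mathrm{Vol}(\Delta_m)=\mathrm{Vol}(T(A))=|\det T|\,\mathrm{Vol}(A),
\]
so $\mathrm{Vol}(A)=\mathrm{Vol}(\Delta_m)/(c_1\cdots c_m)$.

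It then remains to show $\mathrm{Vol}(\Delta_m)=1/m!$. I would prove the slightly stronger statement that the scaled simplex $\Delta_m(t)=\{w\ge0:\sum_k w_k\le t\}$ has volume $t^m/m!$ for $t\ge0$. The argument is induction on $m$ using Fubini.

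For $m=1$ the set is $[0,t]$, which has volume $t$.

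For the inductive step, fix the last coordinate $w_m\in[0,t]$. The slice in the remaining coordinates is $\Delta_{m-1}(t-w_m)$, so
\[
\mathrm{Vol}(\Delta_m(t))=\int_0^t\frac{(t-w_m)^{m-1}}{(m-1)!}\,\dd w_m=\frac{t^m}{m!}.
\]
Setting $t=1$ gives $\mathrm{Vol}(\Delta_m)=1/m!$.

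As an alternative, one could reuse the layer-cake/Gamma trick already used in the paper to obtain $V=(d-n)!V_0$. By homogeneity of degree $m$, $\int_{\mathbb R_+^m}e^{-\sum_k w_k}\dd w=m!\,\mathrm{Vol}(\Delta_m)$, while the left side factorizes as $\prod_k\int_0^\infty e^{-w_k}\dd w_k=1$. This also gives $\mathrm{Vol}(\Delta_m)=1/m!$, and applied directly with weights $c_k$ it would even give the full statement in one line.

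There is no genuine obstacle here. The only points to check are that positivity of the $c_k$ is what makes $T$ map the orthant onto itself, and that it is what keeps $A$ bounded. This positivity is exactly the condition $1\mp\psi_k>0$ that the paper establishes before invoking the lemma.
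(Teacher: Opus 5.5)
Your proposal is correct and follows the paper's proof: a diagonal rescaling reduces the set to the standard simplex $\Delta_m$, and $\mathrm{Vol}(\Delta_m)=1/m!$ then comes from slicing along one coordinate and inducting on $m$. The only difference is cosmetic: you carry the scale $t$ inside the induction hypothesis, whereas the paper invokes the homogeneity $\mathrm{Vol}((1-s)\Delta_{m-1})=(1-s)^{m-1}\mathrm{Vol}(\Delta_{m-1})$ directly, and your Gamma-integral aside is a valid one-line alternative.
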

\begin{proof}
Let $\Delta_m$ denote the standard unit simplex given by
\[
\Delta_m = \left\{(v_1,\dots,v_m):v_k \geq 0, \sum_{k=1}^m v_k \leq 1\right\}.
\]
By a change of variables, it is evident that
\[
\mathrm{Vol}\left\{ (v_{1},\dots,v_{m}):v_{k}\geq0,\sum_{k=1}^{m}c_{k}v_{k}\leq1\right\} = \frac{\mathrm{Vol}(\Delta_m)}{c_1\cdots c_m},
\]
so it suffices to compute the volume of $\Delta_m$. This is a classical result; here we reproduce a simple proof for completeness. It is clear that $\mathrm{Vol}(\Delta_1) = 1$, but also
\begin{align*}
\mathrm{Vol}(\Delta_m) &= \int_0^1 \mathrm{Vol}((1-s) \Delta_{m-1}) \dd s \\
&= \mathrm{Vol}(\Delta_{m-1}) \cdot \int_0^1 (1-s)^{m-1} \dd s, \\
&= \frac{1}{m} \mathrm{Vol}(\Delta_{m-1}),
\end{align*}
which implies $\mathrm{Vol}(\Delta_m) = \frac{1}{m!}$. 
\end{proof}
From Lemma \ref{lem:SimplexVol},
\begin{align*}
V_{0}&=\frac{\sqrt{\det(I+\Psi^{\top}\Psi)}}{(d-n)!}\cdot\sum_{\epsilon\in\{\pm1\}^{d-n}}\prod_{k=1}^{d-n}\frac{1}{1-\psi_{k}\epsilon_{k}}\\
&=\frac{\sqrt{\det(I+\Psi^{\top}\Psi)}}{(d-n)!}\cdot\prod_{k=1}^{d-n}\left(\frac{1}{1-\psi_{k}}+\frac{1}{1+\psi_{k}}\right)\\
&=\frac{2^{d-n}\sqrt{\det(I+\Psi^{\top}\Psi)}}{(d-n)!}\cdot\prod_{k=1}^{d-n}\frac{1}{1-\psi_{k}^{2}}.
\end{align*}

\section{The Special Case $p=1,\ n=1$}
\begin{figure}
    \centering
    \includegraphics[width=0.5\linewidth]{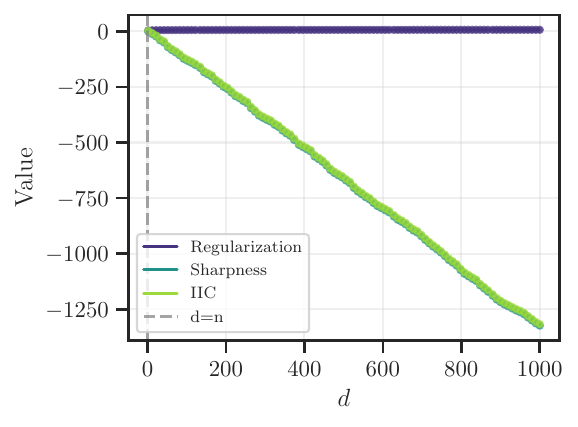}
    \caption{Decomposition of the Interpolating Information Criterion (green) for minimum $\ell^1$-norm interpolating solutions using random Fourier features as a tradeoff between the effect of regularization (purple) and local sharpness (blue). This plot uses the FLIR dataset and randomly select one sample to show the particular result in Corollary \ref{thm:iic_p1_n1}.}
    \label{fig:IIC_p1_n1}
\end{figure}

In the main experiments we plot the $p=1$ decomposition only when the fitted support has the same cardinality as the data set. To further illustrate the change in behavior observed at $p=1$, we additionally consider the small sample setting $n=1$, as derived in Corollary \ref{thm:iic_p1_n1}. In this setting, the sharpness term can decrease much more noticeably, which helps show the maximum extent to which the $\ell^1$ penalty can trade off a weaker decrease in the regularization term for a stronger decrease in sharpness.

Figure \ref{fig:IIC_p1_n1} reports the decomposition for $p=1,\ n=1$ on the FLIR dataset (same preprocessing and RFF construction as in the main text). The purpose of this figure is not to claim practical relevance for $n=1$, but to provide a clear diagnostic example demonstrating how sparsity induced by $\ell^1$ can lead to a substantial reduction in the sharpness component compared to the higher $p$ cases discussed in Figure \ref{fig:IIC_RFF}.

\end{document}